\documentclass[11pt]{article}

\usepackage[top=1in, bottom=1in, left=1in, right=1in]{geometry}

\usepackage{latexsym}
\usepackage{epsfig}
\usepackage{bm}
\usepackage{tikz}
\usepackage{mathrsfs}
\usepackage[toc,page]{appendix}
\usepackage{bbm}
\usepackage{ytableau}
\usepackage{tkz-berge}
\usepackage[linesnumbered, ruled, vlined]{algorithm2e}
\usepackage{array}
\usepackage{multirow}
\renewcommand{\arraystretch}{2} 

\usepackage{hyperref}
\hypersetup{
    colorlinks,
    linkcolor={red!80!black},
    citecolor={blue!100!black},
}
\colorlet{linkequation}{blue}
\usepackage{authblk}

\usepackage[utf8]{inputenc} 
\usepackage[T1]{fontenc}    
\usepackage{url}            
\usepackage{booktabs}       
\usepackage{amsfonts}       
\usepackage{nicefrac}       
\usepackage{microtype}      
\usepackage{xcolor}         

\usepackage{natbib}
\usepackage{graphicx}
\usepackage{subcaption}
\usepackage{float}

\usepackage{amsmath}
\usepackage{amssymb}
\usepackage{mathtools}
\usepackage{amsthm}
\usepackage{enumitem}
\usepackage{tabularx}
\usepackage{bm}
\usepackage{tcolorbox}
\theoremstyle{plain}
\usepackage{wrapfig}
\usepackage{etoc}

\newtheorem{theorem}{Theorem}[section]
\newtheorem{proposition}[theorem]{Proposition}
\newtheorem{lemma}[theorem]{Lemma}
\newtheorem{corollary}[theorem]{Corollary}
\theoremstyle{definition}

\theoremstyle{remark}
\newtheorem{remark}[theorem]{Remark}

\tcbuselibrary{theorems}
\newtcbtheorem[number within=section]{mainresult}{Main Result}%
{colback=blue!5!white,colframe=blue!75!black,
 fonttitle=\bfseries, boxed title style={size=small}}%
{mr}
\usepackage[dvipsnames]{xcolor}

\usepackage{soul}

\newcommand{\JSCheck}[1]{#1}

\title{Asymmetric Scaling Laws from Sparse Features}

\author{%
  John Sous\thanks{Author to whom correspondence should be addressed.}\\
   Department of Applied Physics, Yale University, New Haven, Connecticut 06511, USA \& 
Energy Sciences Institute, Yale University, West Haven, Connecticut 06516, USA\\
   \texttt{john.sous@yale.edu} \\

     \and
     
  Michael Winer\thanks{Present affiliation: Alignment Research Center.}\\
  Institute for Advanced Study, Princeton, NJ 08540, USA\\
  \texttt{mikewins@ias.edu } \\
}

\date{\today}

\allowdisplaybreaks

\begin{document}

\maketitle

\vspace{-6mm}

\begin{abstract}
We introduce a model for neural scaling laws under sparse activations. In the model, test loss is often dominated by rare coordinates that are never observed in the training input. This mechanism induces a novel bottleneck absent from dense models. We derive the asymptotic \JSCheck{population} loss in both the underparameterized and overparameterized regimes, and show that the loss exhibits a double-descent peak near the interpolation threshold---where the number of parameters is just sufficient to fit the training data---resulting in a loss curve governed by two distinct scaling exponents---one for the overparameterized regime and one for the underparameterized regime---with a gap determined by the degree of sparsity. Additionally, we derive a compute-optimal frontier that favors increasing dataset size over model capacity under fixed compute budgets. \JSCheck{We also analyze gradient-descent dynamics and identify a scaling law for the probability that fixed-step gradient descent becomes unstable.} \JSCheck{We further show that the sparsity-induced effect persists under nonlinear activations.} Experiments validating the theory can be found at \href{https://anonymous.4open.science/r/sparse-scaling-neurips2026-3CDB/SparseScaling1.ipynb}{SparseScaling}.
\end{abstract}

\vspace{-2mm}

\section{Introduction}
\label{intro}

Scaling laws describe predictable relationships between model performance and key resource variables: model size (number of parameters) $N$, dataset size (number of training examples or tokens) $D$, and compute budget $C$. These empirical laws provide crucial guidance for efficiently allocating resources when training large machine learning models, particularly large language models (LLMs). Foundational work by Kaplan et al. showed that test loss $\ell$ decreases as a power law with respect to $N$ and $D$ when scaling one while holding the other fixed \citep{Kaplan2020ScalingLaws}. Building on this, Hoffmann et al. showed that under the constraint of a fixed total compute budget $C \propto N D$, both $N$ and $D$ should be scaled jointly in proportion to minimize $\ell$ \citep{Hoffmann2022Chinchilla}. This leads to a compute-optimal scaling law that offers a principled prescription for balancing model and data scale in large-scale training.

Despite intense research, a comprehensive theoretical understanding of scaling laws remains elusive. \JSCheck{One approach to this challenge models the data as Gaussian with power-law covariance, applies a random embedding into a lower-dimensional representation, and uses linear regression to characterize the resulting scaling laws~\citep{pmlr-v119-bordelon20a,Spigler_2020,Bahri2021ExplainingScaling,Maloney2022SolvableScaling}.} This framework~\citep{Maloney2022SolvableScaling} has been shown, both theoretically and empirically, to produce the scaling law $\ell \propto \left(\frac{1}{N} + \frac{1}{D}\right)^{\alpha}$ and the ``Chinchilla" compute-optimal allocation in which, under a compute budget $C \propto N D$, the optimal choice scales as $N^*(C)\asymp D^*(C)\asymp C^{1/2}$.\footnote{This compute-allocation law arises in this framework both in the Bayes-optimal~\citep{Bahri2021ExplainingScaling, Maloney2022SolvableScaling} and one-pass SGD~\citep{Bordelon2024DynamicalScaling,Paquette2024Phases} settings. In the Bayes-optimal case, this scaling can be found  by analytically extremizing the loss under the compute constraint. Results based on random matrix theory support the same scaling in one-pass stochastic gradient descent (SGD) dynamics.}

\JSCheck{An important aspect not determined by a framework that yields a compute-optimal exponent close to $\tfrac{1}{2}$ is whether the power-law exponent governing loss decay with model size, $\alpha_N$, matches that governing decay with data size, $\alpha_D$, as empirical studies suggest need not be the case~\citep{Hoffmann2022Chinchilla}.}\footnote{\JSCheck{Frameworks that predict $\alpha_N = \alpha_D = \alpha$ necessarily imply $N^*(C)\asymp D^*(C)\asymp C^{1/2}$ under the fixed-compute constraint $C \propto ND$. However, a small asymmetry between $\alpha_N$ and $\alpha_D$ can still yield compute-optimal exponents close to $\tfrac{1}{2}$. Thus, near-square-root compute-optimal scaling does not by itself imply symmetric scaling.}} \JSCheck{In this work, we propose a simple theoretical model that reproduces the observed scaling asymmetry and naturally explains the emergence of two distinct exponents in the  population loss.}\footnote{\JSCheck{\citet{bordelon2025featurelearning} also report scaling asymmetry, but there the relevant regimes are controlled by task difficulty rather than by the sparse-feature mechanism studied here.}}  \JSCheck{
Our model is a random embedding followed by a linear readout, with a sparse input activation structure such that only a subset of coordinates of \(\mathbf{x}\) is active. This setup is motivated by the idea that the data-generating process excites only a sparse subset of high-dimensional feature directions. The resulting loss displays distinct scaling with $N$ and $D$. We derive an asymptotic scaling law in which the difference between $\alpha_N$ and $\alpha_D$ arises continuously as the sparsity level is varied.}\footnote{We note that in training scenarios such as few-pass SGD, the underlying sparsity may be partially or entirely obscured. This disparity suggests a novel phase transition as the number of epochs increases.}

Our main contributions are as follows.
\begin{itemize}[leftmargin=*,nosep]
    \item We introduce the notion of scaling laws for sparse activations by introducing a model that captures the impact of sparsity on optimization and resulting scaling behavior.

    \item We derive the \JSCheck{population loss} for the sparse random feature model, yielding a two-exponent scaling law with an intrinsic asymmetry between the underparameterized and overparameterized regimes and a double-descent peak near the interpolation threshold~\citep{geiger2019jamming,belkin2019reconciling,belkin2020twomodels,hastie2022surprises,bartlett2020benign,mei2022rf,nakkiran2020deep}.

    \item We derive the compute-optimal frontier under a fixed compute budget, and show that increasing sparsity improves compute-efficiency  while shifting the optimal allocation toward larger datasets.

    \item We analyze the training dynamics and convergence properties of the loss during optimization, deriving a scaling law for the failure of fixed-step gradient descent (GD).

    \item \JSCheck{We experimentally verify in a  nonlinear two-layer network that scaling asymmetry continues to arise from sparsity rather than from nonlinearity.}
\end{itemize}

\vspace{-2mm}

\paragraph{Related Work.} Empirical scaling laws show power-law loss improvement with model size and data, and motivate compute-optimal training rules \citep{Kaplan2020ScalingLaws,Hoffmann2022Chinchilla}. On the theory side, \emph{solvable} models based on random features, kernels, and high-dimensional regression derive closed-form or deterministic-equivalent scaling predictions \citep{Bahri2021ExplainingScaling, Maloney2022SolvableScaling,pmlr-v119-bordelon20a,Spigler_2020, Defilippis2024DimensionFreeDE,Lin2024LinRegScaling}. Complementary \emph{dynamical} models study protocol- and time-dependent scaling (\textit{e.g.}, one-pass vs.\ multi-pass training) and can exhibit multiple scaling phases \citep{Bordelon2024DynamicalScaling, Paquette2024Phases, bordelon2025featurelearning}. In contrast, we focus on sparse feature activation, and show it can \emph{intrinsically} yield different exponents in the model-limited versus data-limited regimes even in Bayes-optimal learning. \JSCheck{We thus isolate rare feature coverage, rather than optimizer dynamics or feature learning, as a distinct source of scaling asymmetry.} See Appendix~\ref{appendix:AddRWork} for more discussion of related works.

\section{Statement of Problem}
In this section, we motivate the model and outline the main goals of our analysis.

\subsection{Learning under Sparse High-Dimensional Data}\label{Section:Ourmodel}
We first  specify the data-generation process that yields a sparse power-law structure in the inputs, then describe the random feature model used to learn from these high-dimensional inputs.

\subsubsection{Data Generation Process: Bernoulli-Random Activations with Power-Law Covariance}

We consider input data \(\mathbf{X} = [\,\mathbf{x}_1,\dots,\mathbf{x}_D\,] \in \mathbb{R}^{M\times D}\), with each \(\mathbf{x}_d  \in \mathbb{R}^{M}\), where $D$ is the number of training examples (data size) and $M$ is the data dimension. We henceforth drop the $d$ subscript and simply refer to a single training example as $\mathbf{x}$.  We consider a structured data-generation process in which each input coordinate of \(\mathbf{x}  \in \mathbb{R}^{M}\) is randomly activated according to a heavy-tailed sparsity pattern. Specifically, for \(j=1,2,\dots, M\), the \(j\)th coordinate of \(\mathbf{x}\) is drawn as
\begin{equation}
\mathbb{P}(x_j = 0) = 1 - j^{-\alpha_1 - 1}, \qquad\mathbb{P}\!\left(x_j = \pm j^{-(\alpha_2 + 1)/2}\right) 
= \tfrac{1}{2}\, j^{-\alpha_1 - 1}.
\label{eq:sparse-active-prob}
\end{equation}
Hence $\mathbb{E}[x_j] = 0$ and
$\operatorname{Var}(x_j) = j^{-\alpha_1 - \alpha_2 - 2}$.
Let \( p_j := \mathbb{P}(x_j \neq 0) = j^{-\alpha_1 - 1} \). We require \(\alpha_1 \geq -1\) to ensure that \(p_j \leq 1\). Additionally, to ensure that the \JSCheck{target} variance is finite, the sum over the coordinate-wise variances \(\sum_{j=1}^\infty j^{-\alpha_1 - \alpha_2 - 2}\) must converge, which holds iff \(\alpha_1 + \alpha_2 + 1 > 0\). 

This formulation introduces sparsity through Bernoulli input activations: most coordinates of \(\mathbf{x}\) are zero, while the few active ones follow a heavy-tailed scaling controlled by \(\alpha_1\) and \(\alpha_2\).

\subsubsection{Random Feature Model}
We study a linear model built from \( N \ll M \)  features randomly embedded from these inputs\JSCheck{, with $M\gg\max\{N,D\}$ so that truncation does not affect the power-law tail asymptotics.}  Each input $\mathbf{x}\in\mathbb{R}^M$ is mapped to features $\boldsymbol{\phi}$ via a fixed random embedding $\mathbf{u} \in\mathbb{R}^{N\times M}$, with  $u_{ij} \sim \mathcal{N}(0, 1/N)$ and its $N\times M$ elements picked i.i.d., $\boldsymbol{\phi} = \mathbf{u} \mathbf{x}$, and the (per-example) prediction is $\hat y(\mathbf{x})=\boldsymbol{\theta}^\top \boldsymbol{\phi}=\boldsymbol{\theta}^\top \mathbf{u} \mathbf{x}$ (and the batched prediction is $\hat{\mathbf{y}} =\boldsymbol{\theta}^\top \boldsymbol{\Phi}=\boldsymbol{\theta}^\top \mathbf{u} \mathbf{X}$), with trainable $\boldsymbol{\theta}\in\mathbb{R}^N$.\footnote{\JSCheck{Our model is, in effect, a sketched linear regression. It is random-feature-like in the sense of a frozen random map followed by a trained linear readout, but it differs from canonical random-features models, \textit{e.g.}~\cite{hastie2022surprises,gerace2020generalisation,goldt2020hiddenmanifold}, which study nonlinear feature maps in high-dimensional regimes. Extending those frameworks to the sparse, strongly non-Gaussian/heavy-tailed designs studied here would require going beyond the Gaussian-equivalence tools used in  the nonlinear random-features literature~\cite{hu2022universality,mei2022rf,GoldtEtAl2022GaussianEquivalence}.}} The target function is $y(\mathbf{x})=\mathbf{w}^\top \mathbf{x}=\sum_{j=1}^M w_j x_j$, for random $\mathbf{w}\in\mathbb{R}^M$ with $w_j$ i.i.d. and $\operatorname{Var}(w_j)=1$.  The model parameters and their specifications are summarized in~\tableautorefname~\ref{tab:matrixTable}.

\begin{table}[h]
\caption{Model parameters: Shapes and elementwise scaling.}
\label{tab:matrixTable}
\small
\setlength{\tabcolsep}{4pt}
\renewcommand{\arraystretch}{1.15}
\begin{tabularx}{\columnwidth}{
    l
    >{\hsize=1.25\hsize}X
    >{\hsize=0.4\hsize\centering\arraybackslash}X
    >{\hsize=1.3\hsize}X
}
\toprule
Parameter & Description & Shape & Scaling \\
\midrule
$\mathbf{X}$ & Training data (columns are examples) & $M\times D$ & Per-coordinate std dev. $j^{-(\alpha_1+\alpha_2+2)/2}$ \\
$\mathbf{u}$ & Fixed random embedding & $N\times M$ & $u_{ij}\sim\mathcal{N}(0,1/N)$, i.i.d. \\
$\boldsymbol{\phi}$ & Features for a single input & $N\times 1$ &  \\
$\boldsymbol{\Phi}$ & Features for the full dataset & $N\times D$ &  \\
$\boldsymbol{\theta}$ & Trainable readout & $N\times 1$ & $O(1)$ \\
$\mathbf{w}$ & Target linear functional & $M\times 1$ & $O(1)$ \\
$y$ & Per-example target function value  & scalar & \\
$\mathbf{y}$ & Per-batch target function & $1\times D$ & $O(1)$ \\
\bottomrule
\end{tabularx}
\vspace{-2mm}
\end{table}

This generative model allows us to examine how sparsity and heavy-tailed structure in the data influence performance. As we show below, these structural properties induce a loss scaling law with two distinct exponents, $\alpha_N \neq \alpha_D$, a sparsity-dependent compute-optimal frontier with an exponent $\alpha_C$ that shifts the optimal allocation toward data, and a regime in which GD fails to converge. The resulting regimes are summarized in Figure~\ref{fig:phase-diagram}. \JSCheck{This scaling asymmetry also persists in experiments with nonlinear MLPs.}

\begin{wrapfigure}[22]{L}{0.525\textwidth}
\vspace{-0.45cm}
\centering
\includegraphics[width=0.5\textwidth]{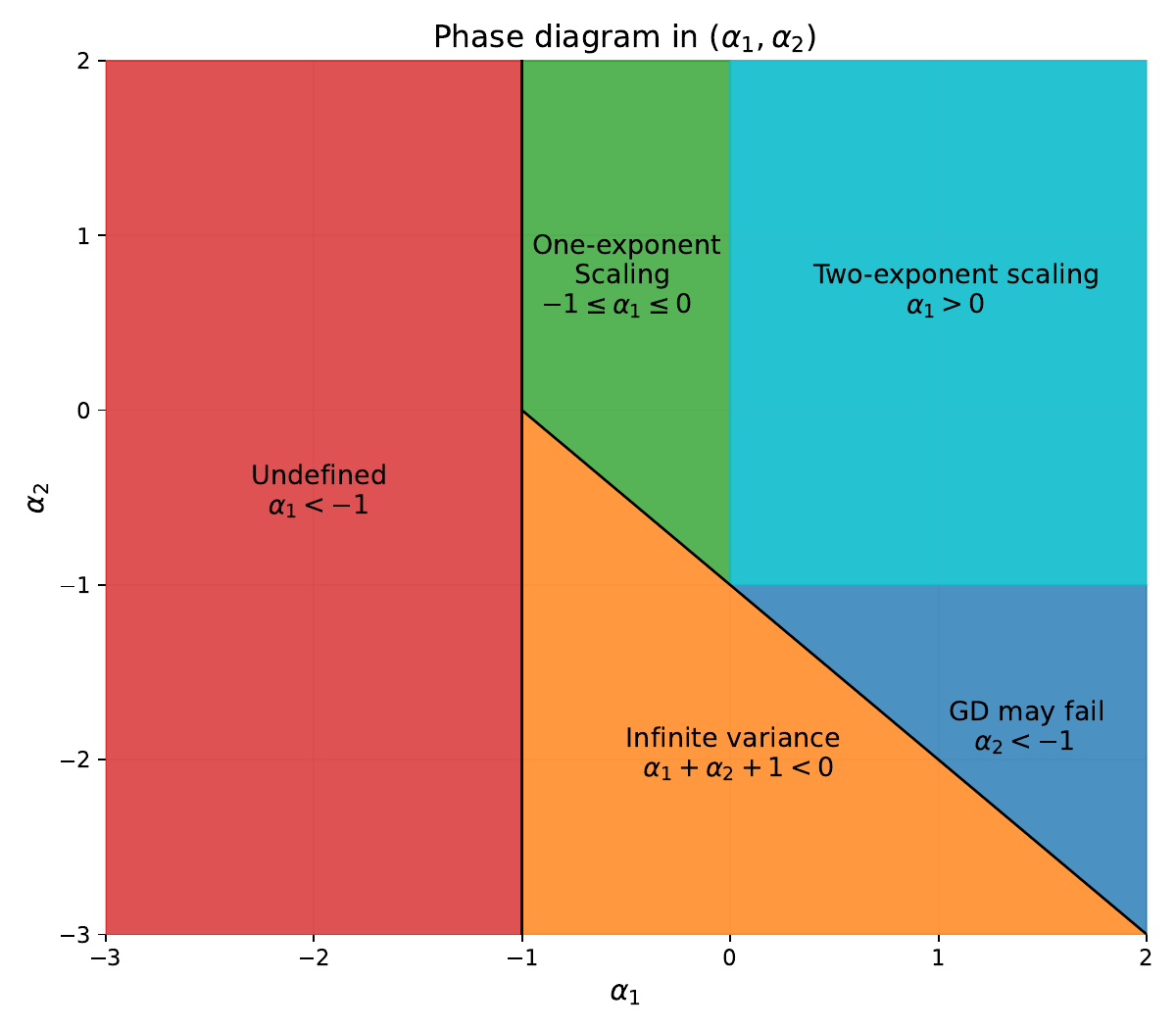}
\vspace{-2mm}
\caption{
\textbf{Phase diagram in $(\alpha_1,\alpha_2)$.} Solid black lines mark the boundary beyond which the model is well defined. Within it we identify three regimes: symmetric one-exponent scaling, asymmetric two-exponent scaling, and a GD-failure regime with its own scaling law.
}
\label{fig:phase-diagram}
\end{wrapfigure}

\vspace{-5mm}
\paragraph{Interpretation: High-Dimensional Representations with Rare but Informative Features.}  Sparse activations of \(\mathbf{x}\) can be understood as rare but highly informative features that appear only in a small fraction of examples yet carry disproportionate predictive value, \textit{e.g.}, an extremely infrequent keyword such as ``pheochromocytoma,'', rare financial shocks such as the 2008 crisis, or an uncommon diagnostic biomarker.

The random feature model can be motivated by interpreting the high-dimensional vector \(\mathbf{x}\) as a nonlinear expansion of a much lower-dimensional latent variable \(\tilde{\mathbf{x}}\)~\citep{rahimi2007randomfeatures}. The components of \(\mathbf{x}\) could correspond to all polynomial or Fourier basis functions of \(\tilde{\mathbf{x}}\), while the features \(\boldsymbol{\phi}=\mathbf{u}\mathbf{x}\) represent the smaller subset of directions actually captured by our model. Although the mapping through \(\mathbf{u}\) is entirely linear, introducing randomness, it can be interpreted as the first layer of a two-layer network (with no nonlinearity) whose weights are frozen at random initialization~\citep{cho2009kernel}. In this view, the model explores how much predictive structure can be recovered when learning is restricted to a fixed, randomly oriented subspace of the high-dimensional representation.

Thus, this random feature model with sparse activations reveals how rare, high-impact signals shape optimization and give rise to distinct scaling laws, highlighting how sparsity fundamentally alters achievable performance.

\subsection{Main Goals}

Our goal is to characterize the trained-to-completion loss for the random sparse feature model described above. The population loss of the trained estimator is (Appendix~\ref{appendix:computation})
\begin{equation}
\ell_{\mathrm{Bayes}} = \mathbb{E}_{\mathbf{x}}\left[(y - \hat{y}(\mathbf{x}))^2\right] = \mathbb{E}_{\mathbf{x}}\!\left[ \big( \mathbf{w}^\top \mathbf{x}
- \boldsymbol{\theta}^{\star\!\top}\mathbf{u}\mathbf{x} \big)^2 \right],
\label{eq:bayes-loss}
\end{equation}
\JSCheck{where \(\boldsymbol{\theta}^\star\) denotes the minimum-\(\ell_2\)-norm empirical-risk minimizer, equivalently the solution reached by GD from zero initialization when the empirical loss is trained to completion. Note that
\(\hat{y}(\mathbf{x}) = \boldsymbol{\theta}^{\star\top}\mathbf{u}\mathbf{x}\) corresponds to a linear function of
\(\mathbf{x}\) whose weight vector lies in the row space of \(\mathbf{u}\). Consequently, when
\(\mathbf{w} \notin \operatorname{rowspan}(\mathbf{u})\), this feature map cannot recover
\(y = \mathbf{w}^\top \mathbf{x}\) exactly, and a nonzero approximation error is unavoidable.}

\JSCheck{We aim to derive the scaling laws and compute-optimal frontier for this Bayes-optimal loss, emphasizing the dependence on the data distribution and its effect on optimization. In particular, we show that a two-exponent loss scaling law and a sparsity-dependent compute-optimal exponent can arise purely from the structure of the input distribution, in which \(\mathbf{x}\) contains sparse but informative coordinates. Our objective is to identify exponents \(\alpha_N\), \(\alpha_D\), and \(\alpha_C\) such that $\ell_{\mathrm{Bayes}} \;\asymp\; N^{-\alpha_N} + D^{-\alpha_D}$, and the compute-optimal frontier satisfies $\ell_{\mathrm{Bayes}}^\star \;\asymp\; C^{-\alpha_C}$,
where \(\ell_{\mathrm{Bayes}}^\star\) is the loss achieved by compute-optimal allocation. We also study GD in this setting, examining how optimization dynamics interact with the random feature map and the sparse data distribution, and whether it attains the Bayes rates or instead exhibits additional computational constraints, which we dub a scaling law of failure of GD.  All proofs are deferred to the appendices.
}

\section{\JSCheck{Scaling Asymptotics of Sparse Random Features}}\label{Sec:scaling1}

Our strategy is to analyze the scaling of the Bayes-optimal loss,  Eq.~\eqref{eq:bayes-loss}, for our model of sparse random features (specified in Section~\ref{Section:Ourmodel}) in the
\emph{underparameterized} regime, where $N$
is much smaller than $D$ and in
the \emph{overparameterized} regime, where \(N\) vastly exceeds \(D\). We will show
that these regimes are governed by two distinct power laws, with different
exponents, demonstrating that the loss is asymmetric and thus follows a two-exponent scaling law. This symmetry is broken precisely due to the sparsity of the input activation pattern.  See Appendix~\ref{App:Scaling} for details. 

\subsection{Loss Scaling from Unmodeled Features}

The random feature model reduces to linear regression in the effective weight vector \(\hat{\mathbf{w}} = \mathbf{u}^\top \boldsymbol{\theta}\). The population mean-squared error can be expressed as $\ell(\hat{\mathbf{w}}) = \mathbb{E}_{\mathbf{x}}\!\left[(\mathbf{w}^\top\mathbf{x} - \hat{\mathbf{w}}^\top\mathbf{x})^2\right]
= (\mathbf{w}-\hat{\mathbf{w}})^\top \boldsymbol{\Sigma}_x (\mathbf{w}-\hat{\mathbf{w}})$, where \(\boldsymbol{\Sigma}_x = \mathbb{E}[\mathbf{x}\mathbf{x}^\top]\) is diagonal
since the input coordinates are independent. \JSCheck{To expose the scaling mechanism, consider a predictor that matches the first \(k\) coordinates of \(\mathbf{w}\) and sets all remaining coordinates to zero: $\hat{w}_j = w_j \mathbf{1}_{\{j \le k\}}$}. Then \(\mathbf{w}-\hat{\mathbf{w}}\) has entries \(0\) for \(j \le k\) and \(w_j\)
for \(j>k\), and substituting into the expression for \(\ell(\hat{\mathbf{w}})\) yields
\(\ell(\hat{\mathbf{w}})
= \sum_{j>k} w_j^2\, \operatorname{Var}(x_j)\). Under our assumptions \(w_j^2 = O(1)\), the population loss is therefore governed, up to constants, by the \emph{unmodeled input variance} $\sum_{j>k} \operatorname{Var}(x_j)$. \JSCheck{This calculation suggests  that the asymptotic behavior of the loss is governed by the tail of the variance spectrum of the input distribution:}
\begin{equation}
\ell_{\mathrm{Bayes}} \;\asymp\; \sum_{j>k} \operatorname{Var}(x_j).
\label{eq:bayes-tail}
\end{equation}
\JSCheck{The argument above is only an intuition-building asymptotic device to make the mechanism transparent. Appendix~\ref{App:Scaling}, particularly Sections~\ref{prop:underparam} and~\ref{App:rig-proof}, provide a rigorous proof.}

\subsection{Prior Results: Scaling Law for Uniformly Activated Data with Power-Law Covariance}

Before analyzing our model with sparse activations, we first review the scaling laws established for random feature models with fully active (non-sparse) input coordinates~\citep{Maloney2022SolvableScaling}. These results form the baseline against which the effects of sparsity will be contrasted.

\vspace{-2mm}

\paragraph{Uniformly Activated Data with Power-Law Covariance.}
In the original formulation of~\citet{Maloney2022SolvableScaling}, every input coordinate is active
and is modeled as an independent Gaussian variable with variance decaying as a
power law: $x_j \sim \mathcal{N}(0,\, j^{-\alpha - 1}), \qquad j = 1,\dots,M,\quad \alpha>0$. This matches our model’s fully activated covariance structure under the choice $\alpha_1=-1$ and $\alpha_2=\alpha$.

\vspace{-2mm}

\paragraph{Symmetry between Underparameterized and Overparameterized Regimes.}
For this non-sparse data structure, the optimal-loss scaling law was analyzed in detail in~\citep{Maloney2022SolvableScaling}. We produce a simple \JSCheck{intuitive} argument showing why, in this
setting, a \emph{single} exponent governs the asymptotic behavior of the loss in both the under- and overparameterized regimes, yielding a symmetric one-exponent law $\ell_{\mathrm{Bayes}} \;\propto\; \left(\frac{1}{N} + \frac{1}{D}\right)^{\alpha}$ with $\alpha_N = \alpha_D = \alpha.$  As explained above, the random feature model reduces to linear regression in the effective weight
vector. Thus, if a learned predictor resolves the leading \(k \ll M\) coefficients of \(\mathbf{w}\), the residual (unmodeled) variance is $\sum_{j>k} j^{-\alpha-1} \;\asymp\; \int_k^\infty j^{-\alpha-1}\,dj \;=\; \frac{1}{\alpha}\,k^{-\alpha}$. With finitely many parameters \(N\) and finitely many samples \(D\), the model can identify at most \(O(\min\{N,D\})\) coefficients, since this minimum represents the true bottleneck. This yields the scaling
$\ell_{\mathrm{Bayes}} \;\sim\; \min\{N,D\}^{-\alpha}$,
so that the exponents satisfy
\(\alpha_N = \alpha_D = \alpha\). In the sparse model of Section~\ref{Section:Ourmodel}, one still gets a symmetric one-exponent scaling law when $-1\le \alpha_1 \le 0$. In this regime, with high probability the low-index coordinates are activated sufficiently often across $D$ samples that sparsity is effectively subdominant. The scaling is therefore still controlled by $\min\{N,D\}$, as in the dense case, yielding $\alpha_N = 
\alpha_D = \alpha_1 + \alpha_2 + 1$ (Appendix~\ref{app:weak-sparsity}).

\subsection{Two-Exponent Scaling Law for Sparse Activations ($\alpha_1>0$)}

We now analyze our model of sparse activations for $\alpha_1>0$ (Appendix~\ref{appendix:TwoExpon}). We begin by analyzing the asymptotic behavior of the loss in the underparameterized regime, where we will show that sparsity is effectively hidden. We then turn to the overparameterized regime, where sparsity becomes consequential and leads to a distinct asymptotic loss behavior. We work in the joint limit $N,D\to\infty$, \JSCheck{and we consider asymptotic regimes corresponding to different relative scaling of $N$ and $D$.}\footnote{In the sparse activation model, the relevant comparison is ultimately between $N$ and the  number of input coordinates ever observed in the dataset, $K(D)$ (quantified below), which grows sublinearly in $D$ when $\alpha_1>0$.}

\subsubsection{Underparameterized Regime (\(N \ll D\)).}

In this regime,  the model fits only the leading high-variance coordinates of  $\mathbf{w}$, and the resulting scaling law coincides with that of the  non-sparse case.

\begin{proposition}[Underparameterized Scaling]
\label{prop:underparam}
Under the sparse activation model described in 
Section~\ref{Section:Ourmodel}, the Bayes-optimal loss in the 
underparameterized regime ($N \ll D$) satisfies
\begin{equation}
\ell_{\mathrm{Bayes}, N}
\;\sim\;
N^{-(\alpha_1 + \alpha_2 + 1)},
\quad
\text{so that }
\alpha_N = \alpha_1 + \alpha_2 + 1.
\label{eq:bayes-N-scaling-prop}
\end{equation}
\end{proposition}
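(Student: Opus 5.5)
The plan is to take the limit $D\to\infty$ with $N$ fixed, or more precisely $D \gg$ any polynomial in $N$ needed below. In that limit the minimum-norm empirical risk minimizer $\boldsymbol{\theta}^\star$ converges to the population least-squares solution restricted to the row space of $\mathbf{u}$. Then $\ell_{\mathrm{Bayes},N}$ reduces to a deterministic projection problem:
\[
\ell_N=\min_{\boldsymbol{\theta}}(\mathbf{w}-\mathbf{u}^\top\boldsymbol{\theta})^\top\boldsymbol{\Sigma}_x(\mathbf{w}-\mathbf{u}^\top\boldsymbol{\theta}),\qquad \boldsymbol{\Sigma}_x=\mathrm{diag}(\lambda_j),\ \lambda_j=j^{-(\alpha_1+\alpha_2+2)}.
\]
The first step is to justify this reduction. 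The empirical Gram matrix $\boldsymbol{\Phi}\boldsymbol{\Phi}^\top/D$ converges to $\mathbf{u}\boldsymbol{\Sigma}_x\mathbf{u}^\top$, which is almost surely invertible because it is $N\times N$ and $M\gg N$. Likewise $\boldsymbol{\Phi}\mathbf{y}^\top/D$ converges to $\mathbf{u}\boldsymbol{\Sigma}_x\mathbf{w}$. Both limits follow from the law of large numbers over i.i.d. samples. The key observation is that the population loss depends on the data only through second moments, so sparsity enters only through $\lambda_j$. This is why sparsity is hidden when $N\ll D$.

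Second, I would show that the projection loss scales as $N^{-(\alpha_1+\alpha_2+1)}$. This is exactly the dense computation of \citet{Maloney2022SolvableScaling} with $\alpha$ replaced by $\alpha_1+\alpha_2+1$, so I would either invoke their result directly or reprove both bounds.

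For the upper bound, write $\ell_N=\mathbf{w}^\top\boldsymbol{\Sigma}^{1/2}(I-P)\boldsymbol{\Sigma}^{1/2}\mathbf{w}$, where $P$ is the orthogonal projection onto the span of the columns of $\boldsymbol{\Sigma}^{1/2}\mathbf{u}^\top$.
\begin{itemize}
\item Split the indices into a head $j\le k$ and a tail $j>k$, with $k=cN$ for a small constant $c$.
\item Choose $\boldsymbol{\theta}$ so that $\mathbf{u}^\top\boldsymbol{\theta}$ matches $\mathbf{w}$ on the head, e.g.\ $\boldsymbol{\theta}=(\mathbf{u}_H\mathbf{u}_H^\top)^{+}\mathbf{u}_H\mathbf{w}_H$, which solves $\mathbf{u}_H^\top\boldsymbol{\theta}=\mathbf{w}_H$ exactly when $k<N$.
\item The tail error then has two pieces: $\sum_{j>k}\lambda_j w_j^2\asymp k^{-(\alpha_1+\alpha_2+1)}$, and the leakage $\boldsymbol{\theta}^\top\mathbf{u}_T\boldsymbol{\Sigma}_T\mathbf{u}_T^\top\boldsymbol{\theta}$.
\item For the leakage, use that $\|\boldsymbol{\theta}\|^2\lesssim N\|\mathbf{w}_H\|^2/N$ up to the conditioning of a $k\times N$ Gaussian matrix with $k/N=c<1$, which is bounded with high probability by Marchenko--Pastur. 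Also $\|\mathbf{u}_T\boldsymbol{\Sigma}_T\mathbf{u}_T^\top\|_{op}\lesssim \operatorname{tr}(\boldsymbol{\Sigma}_T)/N+\lambda_k$.
\item These together give leakage of order $k\cdot k^{-(\alpha_1+\alpha_2+1)}/N$, which is $O(N^{-(\alpha_1+\alpha_2+1)})$.
\end{itemize}

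For the lower bound, note that $P$ has rank $N$. By the Eckart--Young / Courant--Fischer argument, averaged over $\mathbf{w}$ (using $\mathbb{E}\,\mathbf{w}\mathbf{w}^\top=I$), we get $\mathbb{E}_{\mathbf{w}}\ell_N=\operatorname{tr}((I-P)\boldsymbol{\Sigma})\ge\sum_{j>N}\lambda_j\asymp N^{-(\alpha_1+\alpha_2+1)}$. Concentration in $\mathbf{w}$ then follows from a variance bound on this quadratic form.

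Third, I would control the finite-$D$ correction, to make precise what $N\ll D$ means. The variance term of ordinary least squares is roughly $N\,\ell_N/D$ times a fourth-moment factor. Because $x_j$ is sparse, its kurtosis scales as $1/p_j=j^{\alpha_1+1}$, which grows with $j$. The features $\phi=\mathbf{u}\mathbf{x}$, however, aggregate many coordinates. So the needed condition is that the empirical covariance $\tfrac1D\boldsymbol{\Phi}\boldsymbol{\Phi}^\top$ is within a constant factor of its mean in operator norm. Matrix Bernstein should give this once $D\gtrsim N\log N$ times a polynomial factor in $N$ coming from the rare coordinates.

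I expect the main obstacle to be this concentration step under heavy-tailed sparse inputs. The plan is to use matrix Bernstein with the per-sample bound $\|\phi\phi^\top\|\le\|\mathbf{u}\|_{op}^2\|\mathbf{x}\|^2$, where $\mathbb{E}\|\mathbf{x}\|^2<\infty$ because $\alpha_1+\alpha_2+1>0$. Then $N\ll D$ is interpreted as $D\ge N^{1+\epsilon}$, which is enough for the residual correction to be lower order than $N^{-(\alpha_1+\alpha_2+1)}$.
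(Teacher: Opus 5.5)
\paragraph{Verdict: correct route, but the regime in Step 3 is wrong.}
Your first two steps are the paper's route made more careful. The paper's proof just invokes the resolvent/projection picture of \citet{Maloney2022SolvableScaling} and sums the covariance tail beyond $N$. Your head/tail construction gives an explicit upper bound. Your Ky Fan argument gives a lower bound $\ell\ge\ell_N\gtrsim N^{-(\alpha_1+\alpha_2+1)}$ that holds for every $D$, since $\mathbf{u}^\top\boldsymbol{\theta}^\star$ always lies in the row space of $\mathbf{u}$.

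The gap is in your third step, where you read $N\ll D$ as $D\ge N^{1+\epsilon}$. For $\alpha_1>0$ (the setting of this proposition) and $\epsilon<\alpha_1$, this is false, and the claimed scaling genuinely fails there. On the event that coordinate $j$ is never active in training, $\mathbf{y}=\mathbf{w}^\top\mathbf{X}$ does not involve $w_j$. So $\hat{\mathbf{w}}$ is independent of $w_j$, and $(w_j-\hat w_j)^2$ has conditional mean at least $\operatorname{Var}(w_j)=1$. Because $\boldsymbol{\Sigma}_x$ is diagonal with entries $\lambda_j=\operatorname{Var}(x_j)$, this gives, for every $N$,
\[
\mathbb{E}\,\ell\;\ge\;\sum_j\lambda_j(1-p_j)^D\;\asymp\;D^{-(\alpha_1+\alpha_2+1)/(\alpha_1+1)}.
\]
This dominates $N^{-(\alpha_1+\alpha_2+1)}$ as soon as $N\gg D^{1/(\alpha_1+1)}$.

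Equivalently, $\operatorname{rank}(\boldsymbol{\Phi}\boldsymbol{\Phi}^\top)$ is at most the number of coordinates ever activated, which is of order $K(D)\asymp D^{1/(\alpha_1+1)}$ (Lemma~\ref{lem:learnable-coordinates}). So for $N^{1+\epsilon}\le D\ll N^{1+\alpha_1}$ the empirical Gram matrix is singular. It cannot be within a constant factor of $\mathbf{u}\boldsymbol{\Sigma}_x\mathbf{u}^\top$, whatever concentration inequality you use. This is why the paper's proof works in $N\ll D^{1/(1+\alpha_1)}$, the ``unambiguously underparameterized'' regime of Appendix~\ref{appendix:TwoExpon}, rather than merely $N\ll D$. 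The window $D^{1/(1+\alpha_1)}\ll N\ll D$ is data-limited.

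\paragraph{Why the Bernstein sketch hides this, and how to fix it.}
Your Bernstein sketch is set up at the wrong scale, which hides the problem. Three issues:
\begin{itemize}
\item The per-sample bound $\|\mathbf{u}\|_{op}^2\|\mathbf{x}\|^2$ is unusable: $\|\mathbf{u}\|_{op}^2\approx M/N$ with $M\gg N$, and $\mathbb{E}\|\mathbf{x}\|^2<\infty$ is not the almost-sure bound Bernstein needs.
\item An additive operator-norm error of order one says nothing about eigenvalues of order $N^{-(\alpha_1+\alpha_2+2)}$, which is where you need relative accuracy.
\item Your remark that $\boldsymbol{\phi}=\mathbf{u}\mathbf{x}$ aggregates many coordinates does not hold here. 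Since $\sum_j p_j<\infty$, each sample has $O(1)$ active coordinates, so nothing averages out the $1/p_j$ kurtosis.
\end{itemize}

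If you instead whiten by $\boldsymbol{\Sigma}_\phi:=\mathbf{u}\boldsymbol{\Sigma}_x\mathbf{u}^\top$, then $\boldsymbol{\Sigma}_\phi\succeq\lambda_j\mathbf{u}_j\mathbf{u}_j^\top$, where $\mathbf{u}_j$ is the $j$th column of $\mathbf{u}$. Hence a sample activating only coordinate $j$ satisfies $\|\boldsymbol{\Sigma}_\phi^{-1/2}\boldsymbol{\phi}\|^2\le x_j^2/\lambda_j=1/p_j=j^{\alpha_1+1}$. Over head coordinates $j\lesssim N$ this is $N^{1+\alpha_1}$, so whitened matrix Bernstein yields the condition $D\gtrsim N^{1+\alpha_1}\log N$, matching the obstruction above.

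Make two changes: replace $D\ge N^{1+\epsilon}$ with this hypothesis, and truncate rare large-index activations, which is needed when $\alpha_2<-1$. With those, your three-step plan becomes a sound and more rigorous version of the paper's argument.
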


\vspace{-0.5mm}
\noindent\textit{Interpretation.}
\JSCheck{The model captures only the leading high-variance coordinates of \(\mathbf{w}\). Since sparse coordinates are too rarely observed to influence estimation at this scale, they do not affect the loss. Consequently, the exponent $\alpha_N = \alpha_1 + \alpha_2 + 1$ coincides with the non-sparse case, so sparsity is effectively masked in the underparameterized regime.}

\subsubsection{Overparameterized Regime ($D \ll N$).}

\JSCheck{A key quantity in this regime is the number of input coordinates that are ever observed} (\textit{i.e.}, activated at least once). We therefore begin by quantifying the effective number of learnable coordinates as a function of $D$, which will set the correct scale separating the two asymptotic regimes.

\begin{lemma}[Learnable Coordinates in the Sparse Model when $\alpha_1>0$]
\label{lem:learnable-coordinates}
Under the sparse activation distribution
$p_j = \mathbb{P}(x_j \neq 0) = j^{-\alpha_1 - 1}$, the expected number of coordinates that are active in at least one of the $D$ training samples is asymptotic to the scale
\begin{equation}
K(D)
=
\Gamma\!\left(1 - \frac{1}{\alpha_1 + 1}\right)
D^{\frac{1}{\alpha_1 + 1}}, \footnote{When $\alpha_1>0$, $K(D)\asymp D^{1/(\alpha_1+1)}=o(D)$, \textit{i.e.} only a sublinear number of coordinates are ever observed. Interpolation is still possible since the data lie in a $K(D)$-dimensional subspace, which a linear predictor in $\mathbb{R}^N$ can fit. Unobserved coordinates receive no signal and are typically suppressed by mild regularization.
}
\label{eq:K-of-D}
\end{equation}
where $\Gamma(\cdot)$ is the Gamma function. Moreover, $K(D)$ \JSCheck{sets the scale of the} number of coefficients of $\mathbf{w}$ that can be estimated from data: a coordinate that is never activated carries no information about its associated weight, and hence cannot contribute to the learned predictor.
\end{lemma}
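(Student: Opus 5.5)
By linearity of expectation, the expected number of coordinates active in at least one of the $D$ i.i.d.\ samples is
\[
\mathbb{E}[K_{\mathrm{obs}}(D)] = \sum_{j=1}^{M} \bigl(1-(1-p_j)^D\bigr), \qquad p_j = j^{-\alpha_1-1}.
\]
We take $M\to\infty$ first; this is harmless because $M \gg \max\{N,D\}$ and the summand decays like $D p_j$, which is summable for $\alpha_1>0$. Set $\beta = \alpha_1+1>1$. The plan is to compare the sum to an integral and then extract the power of $D$ by a change of variables.

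\textbf{Step 1: replace $(1-p_j)^D$ by $e^{-Dp_j}$.} Write $(1-p)^D = e^{D\log(1-p)}$ and use $-p-p^2 \le \log(1-p) \le -p$ for $p\le 1/2$. This gives
\[
0 \le e^{-Dp_j} - (1-p_j)^D \le e^{-Dp_j}\bigl(e^{Dp_j^2}-1\bigr).
\]
The error is nonnegligible only where $Dp_j^2 \gtrsim 1$, i.e.\ $j \lesssim D^{1/(2\beta)}$. In that range each term is bounded by $1$, so the total error is $O(D^{1/(2\beta)}) = o(D^{1/\beta})$. Elsewhere, the bound $e^{Dp_j^2}-1 \le 2Dp_j^2$ makes the error summable against the main term. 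The first finitely many $j$ with $p_j>1/2$ (only $j=1$ when $\alpha_1>0$) contribute $O(1)$.

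\textbf{Step 2: sum-to-integral comparison.} The function $f(t)=1-e^{-Dt^{-\beta}}$ is decreasing in $t$ and lies in $[0,1]$. Monotonicity therefore gives
\[
\int_1^\infty f(t)\,dt \le \sum_{j\ge1} f(j) \le 1+\int_1^\infty f(t)\,dt.
\]
Substituting $t = D^{1/\beta}s$ rewrites the integral as
\[
D^{1/\beta}\int_{D^{-1/\beta}}^\infty \bigl(1-e^{-s^{-\beta}}\bigr)\,ds.
\]
As $D\to\infty$ the lower limit tends to $0$. The integrand is bounded near $0$ and behaves like $s^{-\beta}$ at infinity, which is integrable since $\beta>1$.

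\textbf{Step 3: evaluate the constant.} Substitute $v=s^{-\beta}$, so that $ds = -\tfrac1\beta v^{-1/\beta-1}\,dv$, and integrate by parts:
\[
\int_0^\infty (1-e^{-v})\,\tfrac1\beta v^{-1/\beta-1}\,dv = \int_0^\infty e^{-v} v^{-1/\beta}\,dv = \Gamma\!\left(1-\tfrac1\beta\right).
\]
The boundary terms vanish. At $0$, $(1-e^{-v})v^{-1/\beta}\sim v^{1-1/\beta}\to 0$; at $\infty$, $v^{-1/\beta}\to0$. Combining Steps 1--3 yields $\mathbb{E}[K_{\mathrm{obs}}] = \Gamma(1-\tfrac{1}{\alpha_1+1})\,D^{1/(\alpha_1+1)}(1+o(1))$, which is Eq.~\eqref{eq:K-of-D}.

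\textbf{Step 4: the ``moreover'' clause.} This part is structural rather than asymptotic. If $x_{j,d}=0$ for every sample $d$, then the $j$-th row of $\mathbf{X}$ is zero, so the empirical loss does not depend on $\hat w_j$. Consequently the data carry no information about $w_j$: the conditional distribution of $w_j$ given the data equals its prior. Hence only the $K_{\mathrm{obs}}$ observed coordinates can be estimated from data. For the minimum-norm solution one can add that $\hat{\mathbf{w}} = \mathbf{u}^\top\boldsymbol\theta^\star$ with $\boldsymbol\theta^\star$ in the span of $\mathbf{u}\mathbf{X}$, so $\hat{\mathbf{w}}$ depends on $\mathbf{w}$ only through the observed coordinates. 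I would also note, optionally, that $K_{\mathrm{obs}}$ is a sum of independent Bernoulli variables, so its variance is at most its mean and it concentrates around $K(D)$.

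The main obstacle is Step 1. Making the error from replacing $(1-p_j)^D$ by $e^{-Dp_j}$ uniformly $o(D^{1/\beta})$ requires splitting the range of $j$ at $D^{1/(2\beta)}$; the remaining steps are routine.
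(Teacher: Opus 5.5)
Your proposal is correct and follows the same route as the paper's proof. Both use the Poisson approximation $(1-p_j)^D\approx e^{-Dp_j}$, pass from the sum to an integral, substitute $v=Dj^{-(\alpha_1+1)}$, and integrate by parts to get $\Gamma\bigl(1-\tfrac{1}{\alpha_1+1}\bigr)$; what you add is the explicit error control (the $\log(1-p)$ sandwich and the monotone sum--integral bounds) and a justification of the ``moreover'' clause, both of which the paper leaves implicit.

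One simplification: the split at $j\sim D^{1/(2(\alpha_1+1))}$ in Step~1 is unnecessary. The sharper form of your own bound gives $e^{-Dp_j}\bigl(1-e^{-Dp_j^2}\bigr)\le Dp_j^2e^{-Dp_j}\le p_j/e$, and $\sum_j p_j<\infty$ for $\alpha_1>0$, so the replacement error is $O(1)$ over the whole range.
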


\begin{theorem}[Overparameterized Scaling]
\label{thm:over-scaling}
Under the sparse activation model with $\alpha_1>0$ and coordinate variances
$\operatorname{Var}(x_j) = j^{-\alpha_1 - \alpha_2 - 2}$, the Bayes-optimal loss
in the overparameterized regime satisfies
\begin{equation}
\ell_{\mathrm{Bayes},D}
\;\asymp\;
D^{-\frac{\alpha_1 + \alpha_2 + 1}{\alpha_1 + 1}},
\quad
\text{so that }
\alpha_D = \frac{\alpha_1 + \alpha_2 + 1}{\alpha_1 + 1}.
\label{eq:bayes-loss-over}
\end{equation}
\end{theorem}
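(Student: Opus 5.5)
The argument splits the loss into a lower bound coming from coordinates that never appear in the training set, and a matching upper bound coming from the coordinates that do appear; Lemma~\ref{lem:learnable-coordinates} supplies the cutoff between the two.

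\textbf{Lower bound.} Let $S\subset\{1,\dots,M\}$ be the set of coordinates that are nonzero in at least one of the $D$ training inputs. The minimum-norm interpolant $\boldsymbol{\theta}^\star$ depends on $\mathbf{w}$ only through $\mathbf{y}=\mathbf{w}^\top\mathbf{X}$. Since $x_{jd}=0$ for every $j\notin S$ and every $d$, $\mathbf{y}$ depends only on $(w_j)_{j\in S}$. Condition on $\mathbf{u}$, $\mathbf{X}$ and $(w_j)_{j\in S}$. Then $\hat{\mathbf{w}}=\mathbf{u}^\top\boldsymbol{\theta}^\star$ is fixed, while the $w_j$ with $j\notin S$ are still independent with variance $1$. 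Because $\boldsymbol{\Sigma}_x$ is diagonal, the loss is at least $\sum_{j\notin S}(w_j-\hat w_j)^2\operatorname{Var}(x_j)$. Its conditional expectation is at least $\sum_{j\notin S}\operatorname{Var}(w_j)\operatorname{Var}(x_j)$, because the best guess for an unseen weight is its mean. We then need the following estimate:
\[
\mathbb{E}\sum_{j\notin S} j^{-\alpha_1-\alpha_2-2}=\sum_j (1-j^{-\alpha_1-1})^D j^{-\alpha_1-\alpha_2-2}.
\]
To bound it, note that for $j\ge K:=D^{1/(\alpha_1+1)}$ we have $(1-p_j)^D\ge e^{-Dp_j(1+o(1))}\ge c>0$. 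So the sum is at least $c\sum_{j\ge K}j^{-\alpha_1-\alpha_2-2}\asymp K^{-(\alpha_1+\alpha_2+1)}=D^{-\frac{\alpha_1+\alpha_2+1}{\alpha_1+1}}$. Applying Lemma~\ref{lem:learnable-coordinates}, which says $|S|\asymp K(D)$, and a concentration argument upgrades this from expectation to high probability. This lower bound holds for every $N$, so it is the irreducible floor in the overparameterized regime.

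\textbf{Upper bound.} We must show the interpolant does not do much worse than the floor once $N\gg K(D)$. Restricted to the observed coordinates, the training data is $\mathbf{X}_S$. Two facts make $\mathbf{u}_S$ harmless. First, because $N\gg|S|$, the $N\times|S|$ Gaussian block $\mathbf{u}_S$ has full column rank with condition number $1+O(\sqrt{|S|/N})$. Second, $\mathbf{u}_S$ is independent of the unseen block $\mathbf{u}_{S^c}$.

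The minimum-norm solution gives the effective weight $\hat{\mathbf{w}}=\mathbf{u}^\top\boldsymbol{\theta}^\star$. On $S$ we want $\hat{\mathbf{w}}_S\approx$ the minimum-norm solution of $\hat{\mathbf{w}}_S^\top\mathbf{X}_S=\mathbf{y}$ under the approximately isotropic metric $(\mathbf{u}_S^\top\mathbf{u}_S)^{-1}\approx I$. On $S^c$, $\hat{\mathbf{w}}_{S^c}=\mathbf{u}_{S^c}^\top\boldsymbol{\theta}^\star$.

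The contributions split into three terms, each of which must be bounded by $O(D^{-\frac{\alpha_1+\alpha_2+1}{\alpha_1+1}})$:
\begin{itemize}
\item[(a)] The unseen signal $\sum_{j\notin S}w_j^2\operatorname{Var}(x_j)$. Its expectation is at most $\sum_{j}\min(1,e^{-Dp_j})\,j^{-\alpha_1-\alpha_2-2}$. For $j<K$ the factor $e^{-Dp_j}$ decays fast enough that this part is subleading, and the $j\ge K$ tail gives the target rate.
\item[(b)] Leakage onto unseen coordinates, $\sum_{j\notin S}\hat w_j^2\operatorname{Var}(x_j)$. Given $\boldsymbol{\theta}^\star$, this is $\mathbb{E}\,\hat w_j^2=\|\boldsymbol{\theta}^\star\|^2/N$, so the term is at most $\|\boldsymbol{\theta}^\star\|^2 N^{-1}\sum_j\operatorname{Var}(x_j)$. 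Bounding $\|\boldsymbol{\theta}^\star\|^2$ is routine, as it is controlled by $\|\mathbf{w}_S\|^2=O(K)$ when the condition number of $\mathbf{u}_S$ is bounded. The term is therefore $O(K/N)$, which is negligible only once $N\gg K\,D^{\frac{\alpha_1+\alpha_2+1}{\alpha_1+1}}$. That requirement is stronger than $N\gg K$. To fix this I would refine the estimate: $\boldsymbol{\theta}^\star$ lies in the column span of $\mathbf{u}_S\mathbf{X}_S$, which is not independent of $\mathbf{u}_{S^c}$, but $\mathbf{u}_{S^c}$ does not enter the training features at all. Hence $\hat w_j$ for $j\notin S$ is a Gaussian of variance $\|\boldsymbol{\theta}^\star\|^2/N$. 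I expect this is handled in the paper's Appendix by taking $N\to\infty$ first, where the term vanishes.
\item[(c)] Estimation error on seen coordinates. Here $\hat{\mathbf{w}}_S$ fits $\mathbf{y}$ exactly, but $\mathbf{X}_S$ is $|S|\times D$ with $|S|\ll D$, so it has full row rank with high probability. The system $\hat{\mathbf{w}}_S^\top\mathbf{X}_S=\mathbf{w}_S^\top\mathbf{X}_S$ then forces $\hat{\mathbf{w}}_S=\mathbf{w}_S$ exactly, and this term is $0$.
\end{itemize}

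\textbf{Main obstacle.} The hard step is (b) together with the rank claim in (c). Coordinates seen only once or twice make $\mathbf{X}_S$ badly conditioned and possibly rank-deficient, since two rare coordinates may co-occur only in the same sample. I would handle this by counting such coincidences. The expected number of pairs $j,k>K$ with $j\ne k$ both active in a common sample is $D\left(\sum_{j>K}p_j\right)^2\asymp D\,K^{-2\alpha_1}$, which is at most $D\,D^{-2\alpha_1/(\alpha_1+1)}$. I would then show that unresolved collisions contribute at most the tail variance beyond $K$, which already matches the target rate.
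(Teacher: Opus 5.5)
Your plan is sound and organized differently from the paper's, but step (b) has a concrete gap that your proposed fix does not close, and the rank claim in (c) is false as stated.

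\textbf{How your route compares.} The paper argues only in the limit $D\ll N$. It replaces $\mathbf{u}^\top\mathbf{u}$ by $\mathbf{I}$ in the closed form for $\mathbf{w}_{\mathrm{eff}}$, so that $\mathbf{w}_{\mathrm{eff}}$ becomes the orthogonal projection of $\mathbf{w}$ onto $\operatorname{col}(\mathbf{X})$. It then asserts that the residual vanishes on observed coordinates, and evaluates $\sum_j(1-p_j)^D\operatorname{Var}(x_j)$ as an integral equal to $\frac{1}{\alpha_1+1}\Gamma(\alpha_D)D^{-\alpha_D}$. Co-activations are handled by the same birthday count you use.

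Your lower bound conditions on $(\mathbf{u},\mathbf{X},\mathbf{w}_S)$ and uses independence of the unseen weights. It is correct and more general than anything in the paper: it holds for every $N$ and every estimator built from the training set, with no isometry assumption. Your (a) and your lower bound are the two halves of the paper's integral. One correction: the $j<K$ part of (a) is not subleading. It is of the same order, $\frac{1}{\alpha_1+1}\Gamma(\alpha_D,1)D^{-\alpha_D}$, where $\Gamma(\cdot,\cdot)$ is the upper incomplete Gamma function. This is harmless for an $O(\cdot)$ bound.

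\textbf{The gap in (b).} Your bound fails because you replaced $\sum_{j\notin S}\operatorname{Var}(x_j)$ by $\sum_j\operatorname{Var}(x_j)=O(1)$. Your ``refinement'' only restates the Gaussianity of $\hat w_j$, which you had already used, so nothing improves. The missing observation is that leakage lands only on unseen coordinates, and their total variance $T_S:=\sum_{j\notin S}\operatorname{Var}(x_j)$ is the floor itself. Three facts then finish the step.
(i) Conditional on $(\mathbf{X},\mathbf{u}_S,\mathbf{w})$, the leakage has mean $\|\boldsymbol{\theta}^\star\|^2T_S/N$.
(ii) When $\mathbf{u}_S$ has full column rank, $(\mathbf{u}_S^\top)^{+}\mathbf{w}_S$ is itself an interpolant, even if $\mathbf{X}_S$ is rank-deficient. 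Hence $\|\boldsymbol{\theta}^\star\|^2\le\mathbf{w}_S^\top(\mathbf{u}_S^\top\mathbf{u}_S)^{-1}\mathbf{w}_S$, whose conditional mean is $N|S|/(N-|S|-1)$ for $N>|S|+1$ (inverse-Wishart mean).
(iii) The events $\{j\in S\}$ are independent across $j$, so $\mathbb{E}[|S|\,T_S]\le\mathbb{E}|S|\,\mathbb{E}T_S$.

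Together these give leakage $\lesssim (K/N)\,D^{-\alpha_D}$. So (b) is not the hard step, and it needs only $N\gg K(D)$, not $N\to\infty$. This also exposes what the paper's substitution $\mathbf{u}^\top\mathbf{u}\approx\mathbf{I}$ hides. Its justification controls only the $S\times S$ block, while the substitution silently drops the off-diagonal block $\mathbf{u}_S^\top\mathbf{u}_{S^c}$, which is exactly your leakage. Your decomposition shows that dropping it is harmless throughout $N\gg K(D)$, including the range $K(D)<N<D$ that the paper's argument does not cover.

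\textbf{The rank claim in (c).} The high-probability full-rank claim fails for $\alpha_1\le 1$; for $\alpha_1<1$, $\mathbf{X}_S$ is rank-deficient with probability tending to one. You flag this, and your count $D^{(1-\alpha_1)/(1+\alpha_1)}$ matches the paper's co-activation paragraph, which is no more rigorous.

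To turn it into a bound, use the structure of the error. The seen-coordinate error is the $(\mathbf{u}_S^\top\mathbf{u}_S)^{-1}$-orthogonal projection of $\mathbf{w}_S$ onto $\ker\mathbf{X}_S^\top$. So it is supported on rows that enter some linear dependency of $\mathbf{X}_S$, and its expected cost is at most a constant times the total variance of those rows.

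Rows with $j\ge cK$ need no collision count at all, since their total variance is already $O(D^{-\alpha_D})$. The real content is the rows with $j\ll K$. Their contribution stays $O(D^{-\alpha_D})$ provided the probability that row $j$ enters a dependency decays faster than $(Dp_j)^{-\alpha_D}$ in the number $Dp_j$ of times coordinate $j$ is observed. Your pair count beyond $K$ does not address this. With discrete $\pm$ amplitudes it is not automatic: two rows with identical support of size $m$ are proportional with probability $2^{1-m}$, not $0$.
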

A rigorous proof of Theorem~\ref{thm:over-scaling} is given in Appendix~\ref{App:rig-proof} via a continuous argument.

\begin{remark}[Intuition]
A simple heuristic recovers the same sample-limited exponent as in 
Theorem~\ref{thm:over-scaling}. Coordinate $j$ is active in a single sample  with probability $p_j = j^{-\alpha_1 - 1}$, so across $D$ samples the expected  number of activations is $D p_j$. A coordinate is observed often enough to be estimable once $D p_j \gtrsim 1$, which yields the cutoff $j \;\lesssim\; j_D \;\equiv\; D^{1/(\alpha_1+1)}$. Thus the number of learnable coordinates scales as  $K(D) \asymp j_D$, in agreement with  Lemma~\ref{lem:learnable-coordinates}. Coordinates with $j > j_D$ are effectively unobserved. Their contribution to the test error is the unmodeled variance $\sum_{j>j_D} j^{-\alpha_1 - \alpha_2 - 2} \;\asymp\; j_D^{-(\alpha_1+\alpha_2+1)} \;=\; D^{-\frac{\alpha_1+\alpha_2+1}{\alpha_1+1}}$. \JSCheck{Consequently, \(\ell_{\mathrm{Bayes},D} \asymp D^{-\frac{\alpha_1+\alpha_2+1}{\alpha_1+1}}\),} with $\alpha_D = \frac{\alpha_1+\alpha_2+1}{\alpha_1+1}$, matching the rigorous sample-limited exponent in  Theorem~\ref{thm:over-scaling}.
\end{remark}

\begin{corollary}[Asymmetry of Exponents]
\label{cor:asymmetry}
For all \(\alpha_1 > 0\) and \(\alpha_1+\alpha_2+1>0\), the exponents in the underparameterized and overparameterized regimes satisfy
\begin{equation}
\alpha_D
=
\frac{\alpha_1 + \alpha_2 + 1}{\alpha_1 + 1}
\;<\;
\alpha_1 + \alpha_2 + 1
=
\alpha_N.
\end{equation}
Thus,  for $\alpha_1 > 0$, the sparse model  exhibits an intrinsic asymmetry between the sample-limited and parameter-limited regimes, giving rise to the
two-exponent scaling law.
\end{corollary}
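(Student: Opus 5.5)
The corollary follows directly from Proposition~\ref{prop:underparam} and Theorem~\ref{thm:over-scaling}, which give the two exponents explicitly as
\[
\alpha_N = \alpha_1+\alpha_2+1, \qquad \alpha_D = \frac{\alpha_1+\alpha_2+1}{\alpha_1+1}.
\]
The only remaining work is to compare these two closed-form expressions under the stated hypotheses $\alpha_1>0$ and $\alpha_1+\alpha_2+1>0$. The first hypothesis places us in the regime where Theorem~\ref{thm:over-scaling} applies. The second is the finite-variance condition from Section~\ref{Section:Ourmodel}, so both exponents are well defined and positive.

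Set $s := \alpha_1+\alpha_2+1>0$. Then $\alpha_D = s/(\alpha_1+1)$ and $\alpha_N = s$, so
\[
\alpha_N - \alpha_D \;=\; s\left(1 - \frac{1}{\alpha_1+1}\right) \;=\; \frac{s\,\alpha_1}{\alpha_1+1}.
\]
Since $\alpha_1>0$, the denominator $\alpha_1+1$ is positive and the numerator $s\alpha_1$ is strictly positive. Hence $\alpha_N-\alpha_D>0$, which is the claimed strict inequality $\alpha_D<\alpha_N$.

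As a consistency check, the same computation gives $\alpha_N=\alpha_D$ at $\alpha_1=0$, which matches the symmetric one-exponent regime $-1\le\alpha_1\le 0$ discussed earlier. So the asymmetry appears continuously as the sparsity level increases past $\alpha_1=0$.

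There is no real obstacle beyond the algebra. All of the analytic content sits in the two cited results, and the interpretive sentence, that this gives a two-exponent law, is simply a restatement of $\alpha_N\neq\alpha_D$ in the two asymptotic regimes.
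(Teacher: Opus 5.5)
Your proposal is correct and follows the paper's route: the corollary is read off directly from the exponents in Proposition~\ref{prop:underparam} and Theorem~\ref{thm:over-scaling}. Your identity $\alpha_N-\alpha_D = s\alpha_1/(\alpha_1+1)$ with $s=\alpha_1+\alpha_2+1>0$ is exactly the one-line algebra needed, equivalent to dividing the positive quantity $s$ by $\alpha_1+1>1$, and your $\alpha_1=0$ check agrees with the symmetric regime $-1\le\alpha_1\le 0$.
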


Together, Proposition~\ref{prop:underparam} and Theorem~\ref{thm:over-scaling} yield:
\begin{center}
\begin{tcolorbox}[
    colback=white,
    colframe=black,
    boxrule=0.5pt,
    arc=3pt,
    left=2pt,
    right=2pt,
    top=-2pt,
    bottom=4pt,
    width=0.7\columnwidth,
    before skip=0pt,
    after skip=0pt
]
\begin{equation}
\ell_{\mathrm{Bayes}}(N, D)
\;\asymp\;
N^{-(\alpha_1 + \alpha_2 + 1)}
\;+\;
D^{-(\alpha_1 + \alpha_2 + 1)/(\alpha_1 + 1)}.
\label{eq:two-exponent-summary}
\end{equation}
\end{tcolorbox}
\end{center}
\vspace{-1mm}
\noindent
\JSCheck{This expression makes the asymmetry explicit: model-limited error decays sharply, whereas data-limited error decays slowly. Since the mechanism depends on sparsity and variance profiles, we expect it to extend beyond Bernoulli masks to broader sparse distributions.}

\section{\JSCheck{Empirical Study of the Scaling Laws}}
\label{sec:empirical-scaling}

We now empirically validate the theoretical predictions of the previous section by evaluating the \JSCheck{population loss} over a wide range of $D$ and $N$ in the sparse random-feature model of Section~\ref{Section:Ourmodel}.  Computational details are provided in Appendices~\ref{appendix:computation} and~\ref{app:training}.

\subsection{Scaling Collapse and Universality}
\label{sec:scaling-collapse}

\begin{wrapfigure}[23]{R}{0.525\textwidth}
\vspace{-0.8cm}
\centering
\includegraphics[width=0.5\textwidth]{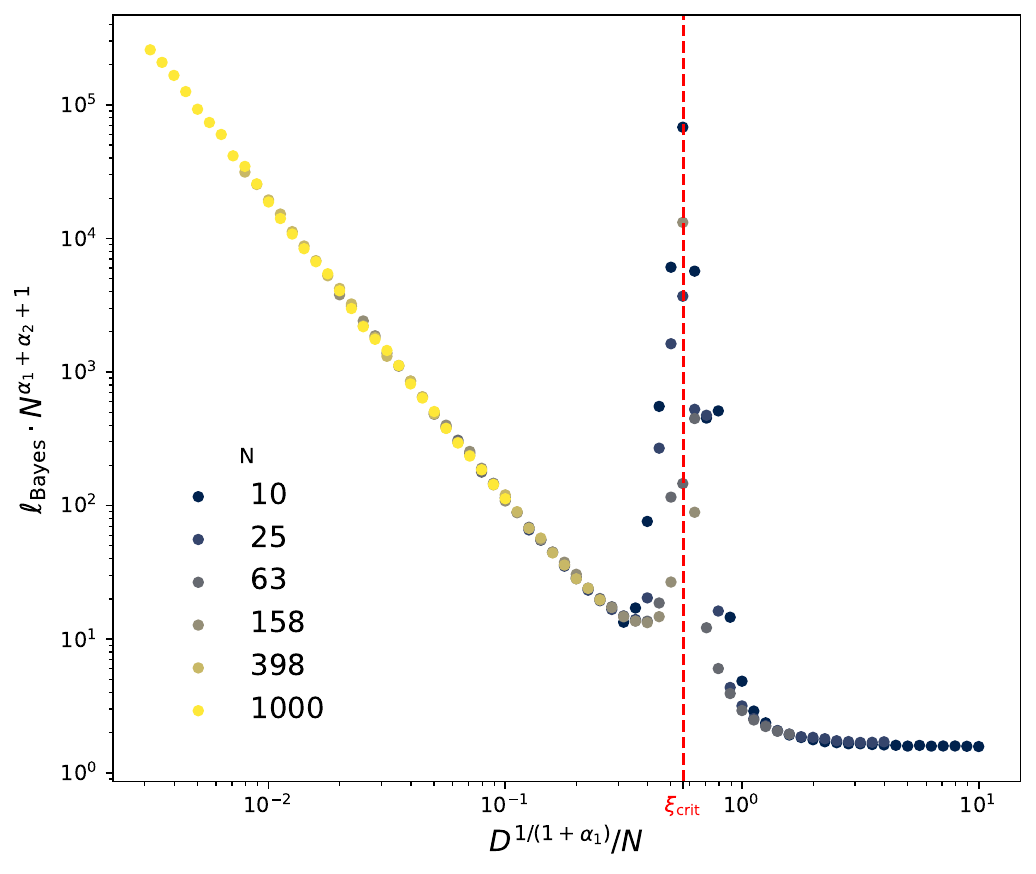}
\vspace{-2mm}
    \caption{
    \textbf{Scaling collapse and double descent.} We plot the rescaled loss \( \text{$\ell_\mathrm{Bayes}$} \cdot N^{\alpha_N} \) as a function of \( (D^{\alpha_D}/N^{\alpha_N})^{1/\alpha_N} \), across a wide range of values for \( D \), \( N \), for fixed $(\alpha_1 = 1.0, \alpha_2 = 0.3)$. All curves collapse onto a single universal function \( \mathcal{S}_{\alpha_1, \alpha_2} \), verifying a universal scale-invariant structure consistent with the predicted scaling law. A sharp peak emerges near $\xi_\mathrm{crit}$ consistent with the double descent phenomenon at the interpolation threshold.
    }
\label{fig:scaling-collapse}
\end{wrapfigure}

In many models, including ours and others such as the Chinchilla scaling law~\citep{Hoffmann2022Chinchilla}, the test loss exhibits distinct scaling in two regimes: \( \ell \sim D^{-\alpha_D} \) in the data-limited regime and \( \ell \sim N^{-\alpha_N} \) in the parameter-limited regime. A natural question is how to interpolate between these two limits. A common ansatz is that the loss takes the form $\ell = \bar{L}(D^{-\alpha_D}, N^{-\alpha_N})$
for some \emph{scale-invariant} function \( \bar{L} \) satisfying \( \bar{L}(cL_1, cL_2) = c \bar{L}(L_1, L_2) \). Simple choices include \( \bar{L}(L_1, L_2) = \max(L_1, L_2) \) or \( L_1 + L_2 \), but more refined analyses~\citep{Maloney2022SolvableScaling,zhang2024largenfieldtheory} show that \( \bar{L} \) often exhibits a \emph{peak} near \( L_1 \approx L_2 \), a phenomenon known as \emph{double descent}~\citep{belkin2019reconciling, hastie2022surprises}.\footnote{\JSCheck{The double-descent peak is a feature of benign interpolation under ridgeless fitting, rather than of the scaling-law picture itself. Regularization or early stopping smooths it out~\citep{hastie2022surprises}.}} Using the scale invariance of \( \bar{L} \), we may write $\ell \cdot N^{\alpha_N} = \bar{L}\left( \frac{D^{-\alpha_D}}{N^{-\alpha_N}}, 1 \right)$,
so the rescaled loss \( \ell \cdot N^{\alpha_N} \) becomes a universal function of the compute ratio $\Xi := \frac{D^{\alpha_D}}{N^{\alpha_N}}$. \JSCheck{We therefore expect the rescaled loss curves from different \( (D, N) \)  to collapse onto a single curve \( \mathcal{S}_{\alpha_1, \alpha_2}(\Xi) \):  a dimensionless scaling function of the rescaled compute ratio.}

\vspace{-2mm}

\paragraph{Empirical Scaling Collapse.} In Figure~\ref{fig:scaling-collapse}, we find that the full loss curve across a wide range of \( (D, N) \) pairs collapses onto a single universal function after appropriate rescaling: $\ell_{\mathrm{Bayes}} \cdot N^{\alpha_1 + \alpha_2 + 1} = 
\mathcal{S}_{\alpha_1, \alpha_2}\left( \frac{1}{N} D^{\frac{1}{\alpha_1 + 1}} \right)$, where $\mathcal{S}(u)=\bar{L}(u^{-\alpha_N},1)$, so that
\JSCheck{$\bar{L}(\frac{D^{-\alpha_D}}{N^{-\alpha_N}},1)
=\mathcal{S}_{\alpha_1,\alpha_2}\!\left(\xi\right)$, with $\xi := \left(\frac{D^{\alpha_D}}{N^{\alpha_N}}\right)^{1/\alpha_N}
=\frac{D^{1/(\alpha_1+1)}}{N}$}. This implies that knowing the loss curve for one setting of \( (D, N) \) suffices to predict its shape at other scales via a simple rescaling transformation. \JSCheck{This collapse supports the two-regime scaling theory and recovers the predicted asymptotic behaviors: in the underparameterized regime, \( \ell \sim N^{-\alpha_N} \) with \( \alpha_N=\alpha_1+\alpha_2+1 \), while in the overparameterized regime, \( \ell \sim D^{-\alpha_D} \) with \( \alpha_D=(\alpha_1+\alpha_2+1)/(\alpha_1+1) \).}

\vspace{-2mm}

\paragraph{Double Descent Peak.} \JSCheck{In the underparameterized limit, the loss scales as \( \ell \sim N^{-(\alpha_1+\alpha_2+1)} \), while in the overparameterized limit it scales as \( \ell \sim D^{-(\alpha_1+\alpha_2+1)/(\alpha_1+1)} \).} This suggests a crossover at \( N \sim D^{1/(\alpha_1+1)} \), which coincides with the point at which the number of model parameters $N$ matches the number of activated (and hence learnable) coordinates \( K(D) \) from Lemma~\ref{lem:learnable-coordinates}. The shape of \( \mathcal{S}_{\alpha_1, \alpha_2} (\xi) \) in Figure~\ref{fig:scaling-collapse} reveals a prominent \emph{double descent} phenomenon near:
\begin{equation}
\xi_{\mathrm{crit}} = \frac{1}{\Gamma\left( \frac{\alpha_1}{1 + \alpha_1} \right)},
\end{equation}
corresponding to the  threshold where $K(D)$ matches $N$\JSCheck{, with location determined universally by the sparsity exponent \(\alpha_1\).}  

\section{Compute-Optimal Scaling Laws}
\label{sec:compute-optimal}
\JSCheck{We now analyze how to allocate data and model size under a fixed compute budget so as to minimize the loss. We adopt the compute model $C = N D \cdot \min\{N, D\}$ as a unified proxy for the cost of training to completion in our setting
; see Appendix~\ref{appendix:compute} for discussion. We then minimize $\ell(N,D)\sim N^{-\alpha_N}+D^{-\alpha_D}$ subject to fixed compute $C$ to derive the compute-optimal scaling laws.}

\begin{proposition}[Compute-Optimal Frontier]
\label{prop:compute-optimal}
Let the Bayes-optimal loss, for $\alpha_1>0$, scale as  $\ell(N,D) \sim N^{-\alpha_N} + D^{-\alpha_D}$, with $\alpha_N = \alpha_1 + \alpha_2 + 1$ and $\alpha_D = \frac{\alpha_1 + \alpha_2 + 1}{\alpha_1 + 1}$. 
Under a fixed compute budget $C = N D \cdot \min\{N, D\}$, the unique compute-optimal allocation lies in the underparameterized regime \( N < D \), and the optimal allocation and resulting loss scale as
\begin{equation}
\begin{split}
\ell^*(C) &\sim C^{-\alpha_C}, \\
N^*(C) &\sim C^{\alpha_D/(\alpha_N+2\alpha_D)} = C^{1/(\alpha_1+3)}, \qquad
D^*(C) \sim C^{1-2\alpha_D/(\alpha_N+2\alpha_D)} = C^{(\alpha_1+1)/(\alpha_1+3)},\\
\alpha_C &:= \frac{\alpha_N\alpha_D}{\alpha_N+2\alpha_D}
= \frac{\alpha_1+\alpha_2+1}{\alpha_1+3} > 0 .
\end{split}
\label{eq:compute-opt-ND}
\raisetag{4ex}
\end{equation}
\end{proposition}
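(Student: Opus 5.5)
The plan is to minimize the additive ansatz $\ell(N,D)=N^{-\alpha_N}+D^{-\alpha_D}$ directly under the constraint $C=ND\min\{N,D\}$. Because the constraint is piecewise, I split into the two branches $N\le D$ (where $C=N^2D$) and $N\ge D$ (where $C=ND^2$). On each branch I eliminate one variable, minimize a one-variable function, and then compare the two branch minima. Throughout I use that $\alpha_N>\alpha_D>0$ for $\alpha_1>0$, which is Corollary~\ref{cor:asymmetry}. Since $\ell$ is only specified up to constants, I compare exponents of $C$. Minimizing the sum and minimizing $\max\{N^{-\alpha_N},D^{-\alpha_D}\}$ agree up to a factor of $2$, so on each branch the optimum is found by balancing the two terms.

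\emph{Branch $N\le D$.} Substitute $D=C/N^2$ to get $\ell=N^{-\alpha_N}+C^{-\alpha_D}N^{2\alpha_D}$. The first term decreases in $N$ and the second increases, so the function is unimodal. Balancing the terms gives
\[
N^{\alpha_N+2\alpha_D}=C^{\alpha_D},
\]
hence $N^*=C^{\alpha_D/(\alpha_N+2\alpha_D)}$ and $D^*=C/N^{*2}=C^{\alpha_N/(\alpha_N+2\alpha_D)}$. Because $\alpha_N>\alpha_D$, we get $D^*\gg N^*$, so this critical point is feasible on the branch. The resulting loss is $C^{-\alpha_N\alpha_D/(\alpha_N+2\alpha_D)}$. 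Now insert $\alpha_N=s:=\alpha_1+\alpha_2+1$ and $\alpha_D=s/(\alpha_1+1)$. This gives $\alpha_D/(\alpha_N+2\alpha_D)=1/(\alpha_1+3)$, so $N^*\sim C^{1/(\alpha_1+3)}$, $D^*\sim C^{(\alpha_1+1)/(\alpha_1+3)}$ and $\alpha_C=s/(\alpha_1+3)$, as stated.

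\emph{Branch $N\ge D$.} Substitute $N=C/D^2$ to get $\ell=C^{-\alpha_N}D^{2\alpha_N}+D^{-\alpha_D}$, with $D\le C^{1/3}$ on this branch. The unconstrained balancing point is $D=C^{\alpha_N/(2\alpha_N+\alpha_D)}$. Its exponent exceeds $1/3$ exactly when $\alpha_N>\alpha_D$, so this point lies outside the branch. Hence $\ell$ is decreasing in $D$ over the whole branch and is minimized at the boundary $N=D=C^{1/3}$. The loss there is $\asymp C^{-\alpha_D/3}$ (the $N^{-\alpha_N}$ term is smaller). Finally,
\[
\frac{\alpha_N\alpha_D}{\alpha_N+2\alpha_D}>\frac{\alpha_D}{3}\iff 3\alpha_N>\alpha_N+2\alpha_D\iff\alpha_N>\alpha_D,
\]
so the underparameterized branch achieves a strictly better power of $C$. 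The optimum is therefore unique up to constant prefactors and lies in $N<D$.

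The algebra is routine; the main obstacle is justifying the additive ansatz at the optimum itself. At $(N^*,D^*)$ one has $K(D^*)\asymp D^{*1/(\alpha_1+1)}=C^{1/(\alpha_1+3)}\asymp N^*$. So the compute-optimal point sits at $\xi=\Theta(1)$ in the scaling variable of Section~\ref{sec:scaling-collapse}, which is precisely the crossover where the double-descent peak lives. I would handle this using the scaling form $\ell=N^{-\alpha_N}\mathcal{S}_{\alpha_1,\alpha_2}(\xi)$. Along the constraint, $N^{-\alpha_N}$ is a power of $C$ times a function of $\xi$. Minimizing over $\xi$ therefore only changes the prefactor, provided $\mathcal{S}$ is finite away from $\xi_{\mathrm{crit}}$ and grows at the two ends as the asymptotic regimes dictate. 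One then chooses the constant in $\xi$ to avoid the peak at $\xi_{\mathrm{crit}}$. This leaves all exponents unchanged.
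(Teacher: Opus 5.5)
Your proposal is correct and follows the paper's route. On the $N<D$ branch you eliminate $D=C/N^2$ and balance $N^{-\alpha_N}\asymp C^{-\alpha_D}N^{2\alpha_D}$, and you discard the $N>D$ branch because its balancing point violates $N>D$ when $\alpha_N>\alpha_D$, as in Remark~\ref{Remark:ComputeOverParam}; your added boundary check at $N=D=C^{1/3}$ (loss $C^{-\alpha_D/3}$, strictly worse) and your note that the optimum sits at $\xi=\Theta(1)$ go slightly beyond the paper's sketch, which simply takes the additive ansatz as a hypothesis.
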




\begin{remark}[Absence of Overparameterized Optimum]\label{Remark:ComputeOverParam}
\JSCheck{The compute-optimal solution always lies in the underparameterized regime: $N^*(C) < D^*(C)$ for all $C$. An overparameterized optimum $N>D$, with $C=ND^2$, yields $N^*(C)/D^*(C) \sim C^{(\alpha_D-\alpha_N)/(2\alpha_N+\alpha_D)} \ll 1$, contradicting the assumed regime $N>D$. Hence the only valid optimum satisfies $N<D$.}
\end{remark}

\begin{remark}[Impact of Sparsity on Optimal Scaling] 
\JSCheck{As $\alpha_1$ increases, individual samples become more sparse, carrying less information per example.  At the same time, increasing $\alpha_1$ also steepens the marginal variance spectrum $\operatorname{Var}(x_j)=j^{-\alpha_1-\alpha_2-2}$, so the residual tail loss decays more rapidly once coordinates are resolved. Consequently, for fixed \(\alpha_2<2\), the compute-optimal exponent $\alpha_C = \frac{\alpha_1 + \alpha_2 + 1}{\alpha_1 + 3}$ increases monotonically with $\alpha_1$, approaching $1$ as $\alpha_1 \to \infty$. As $\alpha_1$ increases, the optimal model size, $N^*(C)$, grows more slowly with compute, while the optimal dataset size, $D^*(C)$, grows more rapidly. As a result, the allocation shifts toward spending a larger fraction of compute on data rather than parameters.}
\end{remark}

\vspace{-3mm}

\paragraph{Empirical Validation.}
\JSCheck{To validate Proposition~\ref{prop:compute-optimal}, Figure~\ref{fig:compute-scaling} shows test loss versus total compute $C$ for different $N$. Each curve corresponds to a fixed $N$, while $C$ is varied by sweeping $D$. The dashed line shows the predicted compute-optimal frontier, $\ell^*(C)\sim C^{-\alpha_C}$. As $C$ increases, the empirical curves approach this frontier, confirming the predicted scaling.}

\begin{wrapfigure}[21]{R}{0.525\textwidth}
\vspace{-0.1cm}
\centering
\includegraphics[width=0.5\textwidth]{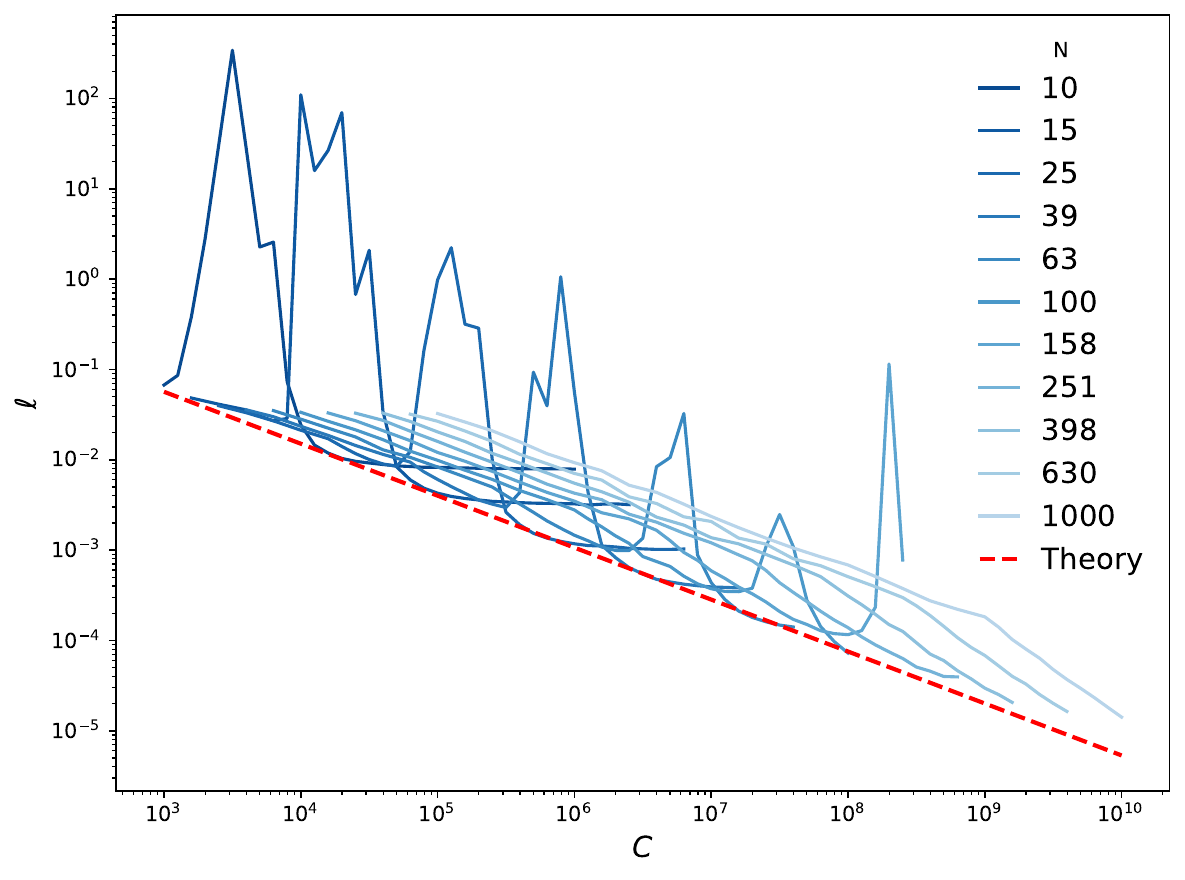}
\vspace{-2mm}
\caption{
\textbf{Empirical scaling of loss with compute.} Each curve corresponds to test loss $\ell$ versus total compute $C$ \JSCheck{for a fixed model size $N$, with $C$ varied by sweeping $D$}, for fixed $(\alpha_1=1.0,\alpha_2=0.3)$. The dashed line denotes the predicted compute-optimal scaling $\ell^*(C) \sim C^{-\alpha_C}$. The empirical loss converges toward this frontier at high compute, \JSCheck{just past the double descent spike for each curve} demonstrating agreement between theory and experiment.}
\label{fig:compute-scaling}
\end{wrapfigure}

\section{Gradient Descent Training Dynamics}
\label{sec:training_dynamics}

\JSCheck{We next study the training dynamics of the readout weights $\boldsymbol{\theta}$ under full-batch GD, a broadly meaningful computational model with some generality beyond the toy setting, on the empirical squared loss $\ell(\boldsymbol{\theta})=\frac{1}{2D}\left\|\mathbf{w}^\top\mathbf{X}-\boldsymbol{\theta}^\top\mathbf{u}\mathbf{X}\right\|_2^2$. With step size $\eta$, one GD step is $\Delta \boldsymbol{\theta}=\frac{\eta}{D}\,\mathbf{u}\mathbf{X}\mathbf{X}^\top\left(\mathbf{w}-\mathbf{u}^\top\boldsymbol{\theta}\right)$. It is convenient to rewrite the dynamics in terms of the input-space residual $\mathbf{r}_t:=\mathbf{w}-\hat{\mathbf{w}}_t$, where $\hat{\mathbf{w}}_t:=\mathbf{u}^\top\boldsymbol{\theta}_t\in\mathbb{R}^M$. 
Then
\begin{equation}
\mathbf{r}_{t+1}
=\left(\mathbf{I}_M-\frac{\eta}{D}\,\mathbf{u}^\top\mathbf{u}\,\mathbf{X}\mathbf{X}^\top\right)\mathbf{r}_t,
\label{eq:GDequation}
\end{equation}
so convergence is controlled by the spectrum of $\mathbf{u}^\top\mathbf{u}\,\mathbf{X}\mathbf{X}^\top$\footnote{\JSCheck{The data dependence enters through the empirical covariance $\mathbf{X}\mathbf{X}^\top$. For dense inputs, $\operatorname{rank}(\mathbf{X}\mathbf{X}^\top)=\operatorname{rank}(\mathbf{X})\approx \min\{M,D\}$, which is typically $\approx D$ when $D\lesssim M$, whereas under our sparse model, $\operatorname{rank}(\mathbf{X}\mathbf{X}^\top)\le K(D)\asymp D^{1/(\alpha_1+1)}$, so it grows only sublinearly in $D$.}}
; see Appendix~\ref{appendix:GD}.}

\subsection{Convergence with High Probability}

\JSCheck{A first question is how $\eta$ must scale for \eqref{eq:GDequation} to converge.
Since $\ell(\boldsymbol{\theta})$ is quadratic, full-batch GD is a linear iteration, and a standard stability criterion implies that \eqref{eq:GDequation} converges whenever $0<\eta<\frac{2}{\lambda_{\max}\!\left(\frac{1}{D}\,\mathbf{u}^\top\mathbf{u}\,\mathbf{X}\mathbf{X}^\top\right)}$, where $\lambda_{\max}(\cdot)$ denotes the largest eigenvalue~\citep{nesterov2004introductory}. Equivalently, this is the usual condition $0<\eta<2/\lambda_{\max}(\nabla_{\boldsymbol{\theta}}^2\ell)$, since $\nabla_{\boldsymbol{\theta}}^2\ell(\boldsymbol{\theta})=\frac{1}{D}\mathbf{u}\mathbf{X}\mathbf{X}^\top\mathbf{u}^\top$.\footnote{Here $\mathbf{u}^\top\mathbf{u}$ and $\mathbf{X}\mathbf{X}^\top$ are positive semidefinite. Moreover, $\mathbf{u}^\top\mathbf{u}\,\mathbf{X}\mathbf{X}^\top$ has the same nonzero eigenvalues as the symmetric positive semidefinite matrix $(\mathbf{X}\mathbf{X}^\top)^{1/2}(\mathbf{u}^\top\mathbf{u})(\mathbf{X}\mathbf{X}^\top)^{1/2}$, so its spectrum on the data span is real and nonnegative.} 
In the underparameterized regime, a standard concentration argument for the feature-space Hessian $\nabla_{\boldsymbol{\theta}}^2 \ell(\boldsymbol{\theta})=\frac{1}{D}\,\boldsymbol{\Phi}\boldsymbol{\Phi}^\top$ yields $\lambda_{\max}(\nabla_{\boldsymbol{\theta}}^2 \ell)=1+o_{\mathbb{P}}(1)$ and hence any fixed $\eta\in(0,2)$ is admissible with high probability. The overparameterized case, however, is more subtle and is controlled by whether the random embedding acts isometrically on the data span.}

\begin{proposition}[Step-size Stability with High Probability]
\label{thm:gd_eta2_hpg}
Assume the sparse activation model of Section~\ref{Section:Ourmodel} with $\alpha_1\geq-1$ and
$\alpha_1+\alpha_2+1>0$. Assume further that the random feature map is sufficiently wide that it is
nearly isometric on the (low-dimensional) span of the data,\footnote{Concretely, since
$\mathrm{rank}(\mathbf{X}\mathbf{X}^\top)\le D$, standard random projection heuristics suggest that for
$N\gg D$ the operator $\mathbf{u}^\top\mathbf{u}$ acts approximately like $\mathbf{I}_M$ when sandwiched
against $\mathbf{X}\mathbf{X}^\top$.}
so that
\begin{equation}
\lambda_{\max}\!\left(\frac{1}{D}\,\mathbf{u}^\top\mathbf{u}\,\mathbf{X}\mathbf{X}^\top\right)
= 1 + o_{\mathbb{P}}(1).
\end{equation}
Consequently, for any fixed $\eta\in(0,2)$ the iteration \eqref{eq:GDequation} is stable and converges with
high probability.
\end{proposition}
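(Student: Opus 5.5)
The statement assumes the spectral condition, so the proof reduces to showing that $\lambda_{\max}<2/\eta$ yields convergence of the linear iteration \eqref{eq:GDequation}. The only real work is handling the non-symmetry of $\mathbf{A}:=\frac{1}{D}\mathbf{u}^\top\mathbf{u}\,\mathbf{X}\mathbf{X}^\top$ and its large kernel.

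\textbf{Step 1: a similarity transform on the data span.} Write $\mathbf{S}=(\mathbf{X}\mathbf{X}^\top)^{1/2}$. As noted in the footnote, $\mathbf{A}$ has the same nonzero spectrum as the symmetric PSD matrix $\mathbf{B}:=\frac{1}{D}\mathbf{S}\,\mathbf{u}^\top\mathbf{u}\,\mathbf{S}$. Rather than tracking $\mathbf{r}_t$ directly, I would track the quantity that controls the empirical loss, $\mathbf{z}_t:=\mathbf{S}\mathbf{r}_t$. Multiplying \eqref{eq:GDequation} by $\mathbf{S}$ gives
\begin{equation}
\mathbf{z}_{t+1}=\left(\mathbf{I}_M-\eta\,\mathbf{B}\right)\mathbf{z}_t ,
\end{equation}
which is a symmetric linear iteration. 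Moreover $\frac{1}{2D}\|\mathbf{z}_t\|^2=\ell(\boldsymbol{\theta}_t)$.

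\textbf{Step 2: contraction of the symmetric iteration.} Diagonalize $\mathbf{B}=\sum_i \mu_i \mathbf{v}_i\mathbf{v}_i^\top$ with $0\le\mu_i\le\mu_{\max}=\lambda_{\max}(\mathbf{A})$. Each component evolves as $(1-\eta\mu_i)^t$.
\begin{itemize}
\item Components with $\mu_i=0$ are frozen.
\item Components with $\mu_i>0$ decay geometrically if $|1-\eta\mu_i|<1$, which is equivalent to $0<\eta\mu_i<2$.
\end{itemize}
By hypothesis $\mu_{\max}=1+o_{\mathbb P}(1)$. So for fixed $\eta\in(0,2)$, choose $\epsilon>0$ with $\eta(1+\epsilon)<2$. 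With probability tending to one, $\mu_{\max}\le 1+\epsilon$, and on that event every positive mode contracts at rate at most $\max\{|1-\eta\mu_{\min}^+|,\ \eta(1+\epsilon)-1\}<1$.

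\textbf{Step 3: convergence in parameter space.} The $\boldsymbol{\theta}$-iterates satisfy $\boldsymbol{\theta}_{t+1}-\boldsymbol{\theta}_t=\frac{\eta}{D}\mathbf{u}\mathbf{S}\,\mathbf{z}_t$. The frozen modes ($\mu_i=0$) satisfy $\mathbf{u}\mathbf{S}\mathbf{v}_i=0$, since $\mathbf{v}_i^\top\mathbf{B}\mathbf{v}_i=\frac1D\|\mathbf{u}\mathbf{S}\mathbf{v}_i\|^2$. Hence only decaying modes contribute, the increments are summable, and $\boldsymbol{\theta}_t$ converges. From zero initialization it converges to the minimum-norm minimizer $\boldsymbol{\theta}^\star$, because the iterates stay in the range of $\mathbf{u}\mathbf{X}$.

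Equivalently, I could phrase this through the feature-space Hessian $\frac1D\mathbf{u}\mathbf{X}\mathbf{X}^\top\mathbf{u}^\top$, which has the same nonzero spectrum, and apply the standard quadratic-GD criterion \citep{nesterov2004introductory} directly.

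\textbf{Main obstacle.} The delicate point is that $\mathbf{A}$ is not symmetric and has a huge kernel, so "spectral radius $<1$" must be interpreted on the right invariant subspace. The similarity/range argument in Step 3 is what handles this.

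A secondary caveat concerns the smallest positive eigenvalue $\mu_{\min}^+$. It can be tiny, which affects only the rate and not stability, so convergence still holds. This holds for each fixed realization on the high-probability event, which is all the statement claims.

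I would not attempt to prove the isometry hypothesis itself, since it is assumed in the statement.
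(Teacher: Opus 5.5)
Your argument is correct for the statement as literally written, with the displayed bound on $\lambda_{\max}$ taken as a hypothesis. However, the division of labor is the reverse of the paper's.

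The paper dispatches the implication you prove in a single sentence (``the usual spectral stability condition''). Your Steps 1--3 make it rigorous. The substitution $\mathbf{z}_t=(\mathbf{X}\mathbf{X}^\top)^{1/2}\mathbf{r}_t$, together with the observation that zero modes of $\mathbf{B}$ are annihilated by $\mathbf{u}(\mathbf{X}\mathbf{X}^\top)^{1/2}$, cleanly handles the non-symmetry and large kernel of $\mathbf{u}^\top\mathbf{u}\,\mathbf{X}\mathbf{X}^\top$. To identify the limit as $\boldsymbol{\theta}^\star$, you should also note that the gradient $-\frac{1}{D}\mathbf{u}(\mathbf{X}\mathbf{X}^\top)^{1/2}\mathbf{z}_t$ tends to zero, so the limit is a minimizer. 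Staying in the range of $\mathbf{u}\mathbf{X}$ then gives minimum norm.

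The paper's proof instead spends its effort justifying the displayed bound from near-isometry and the sparse data model. It replaces $\mathbf{u}^\top\mathbf{u}$ by $\mathbf{I}_M$ on the data span. It then argues that $\frac{1}{D}\mathbf{X}\mathbf{X}^\top$ concentrates around $\boldsymbol{\Sigma}_x$, whose top eigenvalue is $\operatorname{Var}(x_1)=1$. Finally it rules out the one sparsity-specific obstruction: for $\alpha_2<-1$ the activated amplitude $j^{-(\alpha_2+1)}$ grows with $j$. A Poisson estimate gives $j_{\max}\asymp D^{1/\alpha_1}$ for the largest activated index, so the spike contributes only $\frac{1}{D}j_{\max}^{-(\alpha_2+1)}\asymp D^{-(\alpha_1+\alpha_2+1)/\alpha_1}=o(1)$. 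That is where $\alpha_1+\alpha_2+1>0$ actually enters, and the same mechanism drives the subsequent failure-probability theorem.

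Your proof never uses the data-model hypotheses. It therefore explains why the bound implies convergence, but not why the bound holds in this model. If the displayed equation is read as a consequence to be established, as the paper's proof treats it, the extreme-index spike analysis is what you would need to add.
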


\subsection{A Scaling Law for Failure of Gradient Descent}
\label{sec:gd_failure}

\JSCheck{Proposition~\ref{thm:gd_eta2_hpg} is a high-probability statement; rare datasets can still contain a single rare activation spike that produces an anomalously large top eigenvalue and causes divergence.\footnote{\JSCheck{Rare activations can create spectral outliers in $\mathbf{X}\mathbf{X}^\top$ that are not controlled by the high-probability spectral concentration arguments used in recent approximation/bias/variance analyses of linear-regression scaling laws~\citep{Lin2024LinRegScaling,Lin2025DataReuse,yan2025largerdatasetsrepeatedmore}, so we instead adopt a probabilistic treatment.}} GD becomes unstable when $\lambda_{\max}\!\bigl(\frac{\eta}{D}\mathbf{u}^\top\mathbf{u}\mathbf{X}\mathbf{X}^\top\bigr)>2$. Under the near-isometry approximation $\mathbf{u}^\top\mathbf{u}\approx \mathbf{I}_M$ on the data span, this reduces to $\lambda_{\max}\!\left(\mathbf{X}\mathbf{X}^\top\right) \;>\; \frac{2D}{\eta}$. This is only a concern when activated amplitudes \emph{grow} with index, \textit{i.e.}\ when $\alpha_2<-1$ (since then $x_j^2=j^{-(\alpha_2+1)}$ increases with $j$ on activation).}

\begin{wrapfigure}[25]{R}{0.525\textwidth}
\vspace{-0.7cm}
\centering
\includegraphics[width=0.525\textwidth]{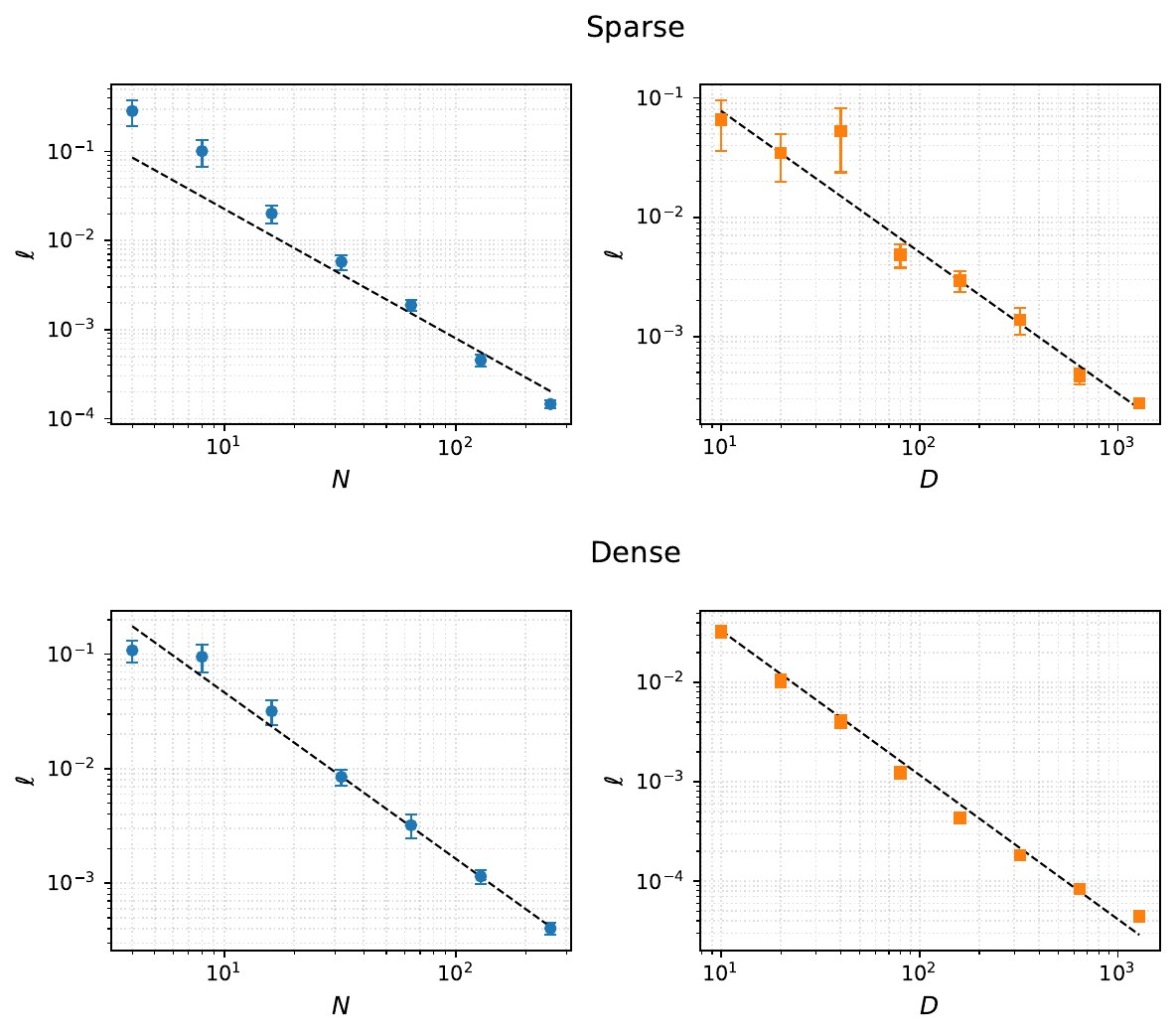}
\vspace{-6mm}
\caption{\textbf{Asymmetry persists under nonlinearity.} Test loss scaling under the ReLU feature map $\boldsymbol{\phi} = \sigma(\mathbf{u}\mathbf{x})$, trained with Nesterov + adaptive restart; $(\alpha_1, \alpha_2) = (1.0, 0.3)$, $5$ seeds. Top: sparse; bottom: dense. Left: $N$-sweep at $D = 50{,}000$; right: $D$-sweep at $N = 8000$. A single exponent $\alpha \approx 1.5$ (dashed lines, jointly fitted with separate intercepts) describes the sparse $N$-, dense $N$-, and dense $D$-sweeps. The sparse $D$-sweep requires its own shallower fit, $\alpha_D \approx 1.2$, breaking the dense-baseline symmetry $\alpha_N = \alpha_D$ predicted by linear theory.}
\label{fig:MLP}
\end{wrapfigure}

\begin{theorem}[Failure Probability of GD]
\label{thm:gd_failure_scaling}
Fix $\eta\in(0,2)$ and suppose $\alpha_2<-1$. Under the sparse activation model of
Section~\ref{Section:Ourmodel}, with $\alpha_1\geq-1$ and $\alpha_1+\alpha_2+1>0$, let
$\nu:=\frac{\alpha_1+\alpha_2+1}{-\alpha_2-1}>0$.
Then the probability that full-batch GD is unstable due to a rare spike on a random dataset of size $D$ obeys
\begin{equation}
\mathbb{P}_{\mathrm{rare}}(\mathrm{diverge})\asymp D^{-\nu},
\label{eq:gd_failure_scaling}
\end{equation}
up to $\eta$-dependent constants. 
\end{theorem}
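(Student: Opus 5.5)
We work under the near-isometry reduction stated just before the theorem: instability occurs iff $\lambda_{\max}(\mathbf{X}\mathbf{X}^\top) > 2D/\eta$. Since $\mathbf{X}\mathbf{X}^\top = \sum_{d} \mathbf{x}_d\mathbf{x}_d^\top$ is positive semidefinite, we have two elementary bounds. First, $\lambda_{\max}(\mathbf{X}\mathbf{X}^\top)\ge (\mathbf{X}\mathbf{X}^\top)_{jj}=\sum_d x_{dj}^2$ for every coordinate $j$. Second, $\lambda_{\max}$ of the bulk (low-index) part is $O(D)$ with high probability. Writing $c:=-\alpha_2-1>0$, an active coordinate $j$ contributes $x_j^2=j^{c}$. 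A single activation therefore destabilizes GD once $j^{c}\gtrsim 2D/\eta$, i.e.\ once
\[
j\ge j_\star:=(2D/\eta)^{1/c}.
\]
The plan is to show that the divergence event is, up to constants, the event that some coordinate $j\gtrsim j_\star$ is activated in at least one of the $D$ samples.

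\textbf{Lower bound.} By the diagonal bound, divergence occurs whenever some $j\ge j_\star$ is active in some sample. By independence, the probability of this event is
\[
1-\prod_{j\ge j_\star}(1-p_j)^D .
\]
Its expected count is $\mu=D\sum_{j\ge j_\star} j^{-\alpha_1-1}\asymp D\,j_\star^{-\alpha_1}$. Substituting $j_\star\asymp D^{1/c}$ gives $\mu\asymp D^{1-\alpha_1/c}=D^{-(\alpha_1+\alpha_2+1)/c}=D^{-\nu}$, using $c-\alpha_1=-\alpha_2-1-\alpha_1$. Since $\nu>0$ we have $\mu\to0$, so the probability is $\asymp\mu$ (using $1-e^{-\mu}\sim\mu$). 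This needs $\alpha_1>0$ for the tail sum to converge. In the case $\alpha_1\le 0$ we have $\nu>0$ only if $c>\alpha_1$; we would handle it by noting that the truncation $M$ is large and the sum still scales as $j_\star^{-\alpha_1}$ when $\alpha_1>0$. I would flag that for $\alpha_1\le0$ the tail diverges and the statement presumably requires $\alpha_1>0$ implicitly, or else that the $M$-cutoff is in play.

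\textbf{Upper bound (main obstacle).} We must show that, without a rare spike at index $\gtrsim j_\star$, the spectrum stays below $2D/\eta$ except on an event of probability $O(D^{-\nu})$. The approach is to split the matrix as
\[
\mathbf{X}\mathbf{X}^\top = A + B + E,
\]
where $A$ collects coordinates $j\le \epsilon j_\star$, $B$ collects coordinates in $(\epsilon j_\star, j_\star)$, and $E$ collects coordinates $j\ge j_\star$.

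For $A$, bound $\lambda_{\max}(A)\le \max_j A_{jj}+\|A-\mathrm{diag}\,A\|$. The diagonal entries are $\sum_d x_{dj}^2$, with mean $Dp_jj^{c}=Dj^{-\alpha_1-\alpha_2-2}\le D$. The off-diagonal part has mean zero, so matrix Bernstein with the bounded per-sample norm $\|\mathbf{x}_d\|^2\lesssim \sum_{j\le\epsilon j_\star}\mathbb{1}_{\mathrm{active}}\, j^{c}$ should give deviations $o(D)$. The difficulty is that the per-sample norm can approach $\epsilon^{c}\,2D/\eta$. We therefore choose $\epsilon$ small enough that, for $\eta<2$ and $\lambda_{\max}\approx D$ on the bulk, the margin $2/\eta-1>0$ absorbs it, and use a Chernoff tail with superpolynomial decay.

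For $B$, the number of activations is Poisson with mean $\asymp D^{-\nu}$ times a constant. So $B\neq0$ has probability $O(D^{-\nu})$, and two or more activations are even rarer.

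Finally, $E=0$ outside the rare event already counted in the lower bound. Combining the three pieces gives $\mathbb{P}(\mathrm{diverge})\le O(D^{-\nu})$ plus exponentially small terms. Together with the lower bound, this yields $\asymp D^{-\nu}$ with $\eta$-dependent constants entering through $j_\star$.
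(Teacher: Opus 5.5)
Your lower bound and Poisson count are the paper's argument. The paper bounds $\lambda_{\max}(\mathbf{X}\mathbf{X}^\top)$ from below by the Rayleigh quotient at $\mathbf{x}_{\rm spec}$. You use the quotient at $\mathbf{e}_j$ (the diagonal entry), which is the same bound in cleaner form. Both then compute $\mathbb{E}[N_{>j^\ast}]\asymp D(j^\ast)^{-\alpha_1}\asymp D^{-\nu}$ and use $1-e^{-\mu}\asymp\mu$.

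Your worry about $\alpha_1\le 0$ is unfounded, and the inequality you wrote is reversed. With $c=-\alpha_2-1$ we have $\nu=(\alpha_1-c)/c$, so $\nu>0$ means $\alpha_1>c>0$. The hypotheses $\alpha_2<-1$ and $\alpha_1+\alpha_2+1>0$ therefore already force $\alpha_1>0$, and the tail sum always converges. What is actually needed is $M\gg j^\ast\asymp D^{1/c}$, which the paper also assumes tacitly.

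The paper never proves a matching upper bound. It identifies the rare-spike divergence event with $\{N_{>j^\ast}\ge 1\}$ and asserts that the bulk contribution is $o(D)$, relying on the concentration heuristic behind Proposition~\ref{thm:gd_eta2_hpg}. Your upper bound therefore goes beyond the paper, but its key step fails as written:
\begin{itemize}
\item With per-sample bound $L\asymp\epsilon^{c}D$ and deviation $t=(2/\eta-1)D$, the matrix Bernstein exponent $t^2/(2\sigma^2+2Lt/3)$ is of order $(1-\eta/2)\epsilon^{-c}$, independent of $D$. The dimensional prefactor $\epsilon j^\ast\asymp D^{1/c}$ then makes the bound vacuous.
\item No fixed-$\epsilon$ argument can give superpolynomial decay. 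A single coordinate $j\asymp\epsilon j^\ast$ active in more than $\epsilon^{-c}$ samples already has $A_{jj}>2D/\eta$, and that event has only polynomially small probability.
\item $\|\mathbf{x}_d\|^2$ restricted to $j\le\epsilon j^\ast$ is not almost surely below $\epsilon^{c}\,2D/\eta$. You must truncate first; several large activations in one sample cost $o(D^{-\nu})$, so this is affordable.
\end{itemize}

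There are two ways to repair this. The first is the Bennett form of the matrix inequality. Since $\sigma^2\lesssim D\max\{1,D^{2-(\alpha_1+1)/c}\}$ is polynomially smaller than $D^2$ (because $\alpha_1>c$), the factor $\log(1+Lt/\sigma^2)\asymp\log D$ gives a tail $D^{1/c}\,D^{-\Theta(\epsilon^{-c})}$. This is $o(D^{-\nu})$ once $\epsilon$ is small but fixed. The second is to insert an extra cutoff at $j^{c}=D^{1-\gamma}$:
\begin{itemize}
\item below it, Bernstein is genuinely superpolynomial;
\item two samples with activations in the window up to $\epsilon^{c}\,2D/\eta$ cost $D^{-2\nu+O(\gamma)}$;
\item a single such sample is handled by Weyl, provided $\epsilon^{c}<1-\eta/2$.
\end{itemize}
With either fix, your union bound gives $O(\epsilon^{-\alpha_1}D^{-\nu})$. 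That is a genuinely two-sided result, stronger than what the paper establishes.
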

\JSCheck{This theorem identifies instability under a fixed step size, not a fundamental optimization barrier. Reducing the learning rate or using clipped updates can restore stability~\citep{nguyen2023improved}.}

\section{Experiments with Nonlinear  Activations}

\JSCheck{A natural question is whether the two-exponent asymmetry we identify is specific to the linear setting or reflects a more general property of sparse data distributions. In Figure~\ref{fig:MLP}, we test robustness to nonlinearity by training a two-layer network with hidden layer $\boldsymbol{\phi} = \sigma(\mathbf{u}\mathbf{x})$ (ReLU activation, frozen random first-layer weights) and a linear readout, using full-batch accelerated GD with Nesterov momentum and adaptive restart; see Appendix~\ref{app:training} for experimental details. The fitted exponents shift downward in the model-limited regime, consistent with the general expectation that nonlinear feature maps smooth power-law input spectra~\citep{cho2009kernel,louart2018random,bietti2019inductive}. Crucially, the qualitative asymmetry persists: sparse $N$-, dense $N$-, dense $D$-sweeps are well-described by a single shared exponent, while the sparse $D$-sweep is better described by a distinct exponent. This supports our main result: the bottleneck mechanism---unobserved coordinates carrying no signal---is robust to nonlinearity and first-order optimization.}

\section{Conclusion}

\JSCheck{We introduced a model that gives rise to two distinct scaling exponents under sparse input activations. By analyzing the population loss in both under- and overparameterized regimes, we showed how sparsity breaks the symmetry of scaling laws in the random feature setting. We derived a compute-optimal scaling law showing that sparsity enhances efficiency by accelerating the decay of test loss with compute, with a frontier that prioritizes data over model size. Analyzing GD dynamics, we proved high-probability stability for all step sizes below 2 and identified a suppressed sublinear scaling law for its probability of failure in the overparameterized regime, suggesting a role for gradient clipping in practical optimization. We validated these results experimentally in the theoretical setting and extended them to a nonlinear feature map, confirming robustness of the sparsity-induced asymmetry to nonlinearity. Our main takeaway is that sparsity as a structural property fundamentally shapes scaling behavior in high-dimensional learning.}

\vspace{-3mm}
\paragraph{Limitations.} \JSCheck{Our results are built around data sparsity as the organizing principle, yet several extensions remain for future work. Incorporating exogenous label noise, which we expect to impose an error floor rather than remove the sparsity-driven bottleneck until it dominates the optimization scale, and analyzing ridge regularization, which would refine the statistical picture in the presence of explicit shrinkage, would extend our picture to noisy and explicitly regularized settings. More ambitiously, a theoretical treatment of nonlinear feature maps---complementing our empirical validation---and an extension to the dynamical regime of few- to multi-pass SGD both confront a common technical obstacle: the non-Gaussian, heavy-tailed nature of our features requires extending existing deterministic-equivalent tools beyond the Gaussian setting~\citep{mei2022rf,hu2022universality}.  Relatedly, extending our analysis to the feature-learning regime is an important direction, especially since feature learning may either concentrate capacity on rare informative coordinates~\citep{bordelon2025featurelearning} or induce sparse representations that hurt generalization in smooth tasks~\citep{petrini2022learning}. A further natural direction is to test these effects at scale, in deeper networks and in transformers, particularly in light of sparse-feature observations from sparse autoencoders~\citep{bricken2023monosemanticity,templeton2024scaling} and recent connections between superposition~\citep{elhage2022toy} and scaling laws~\citep{liu2025superposition}.}



\bibliographystyle{unsrtnat} 
\bibliography{references}

\newpage
\appendix

\newcommand{\weff}{\mathbf{w}_{\mathrm{eff}}}

\section{Additional Related Work}\label{appendix:AddRWork}
Empirical scaling laws relating test loss to parameters, data, and compute were systematically documented in language modeling \citep{Kaplan2020ScalingLaws} and refined into compute-optimal prescriptions emphasizing data-limited training in LLMs \citep{Hoffmann2022Chinchilla}. Related scaling behavior has also been observed across tasks and modalities, including \JSCheck{autoregressive generative modeling~\citep{henighan2020scaling}}, multi-domain studies \citep{Hestness2017PredictableScaling} and vision transformer scaling analyses \citep{Zhai2022ScalingViT}. More recent empirical work has expanded scaling-law analyses to address compute-optimal discrepancies across experimental protocols \citep{Porian2024ResolvingDiscrepancies}, practical scaling laws for weight decay and batch size in LLM pretraining \citep{bergsma2025powerlines}, precision-aware scaling under quantization and training precision \citep{Kumar2025ScalingPrecision}, inference-aware compute-optimal scaling \citep{Sardana2024BeyondChinchillaOptimal}, data-scarcity regimes via model reuse \citep{Wang2023DataEfficientScaling}, and data-constrained scaling under token repetition \citep{Muennighoff2025ScalingDataConstrained}.

\JSCheck{On the theory side, a growing literature develops tractable models that reproduce and explain scaling-law phenomenology. 
Statistical-physics teacher-student models characterize optimal errors and phase transitions in high-dimensional generalized linear models~\citep{barbier2019optimal} and computational-to-statistical gaps in committee machines~\citep{aubin2019committee}. 
Solvable neural-scaling models connect power-law learning curves to spectral structure and compute-optimal frontiers~\citep{Bahri2021ExplainingScaling,Maloney2022SolvableScaling}. 
Random-feature and kernel analyses address invariances~\citep{mei2021invariances}, hypercontractivity and kernel matrix concentration~\citep{mei2022generalization}, correlated linearizations and test error~\citep{LatourelleVigeant2023DysonRF}, and dimension-free deterministic equivalents~\citep{Defilippis2024DimensionFreeDE}. 
Scaling and renormalization approaches provide a complementary high-dimensional regression perspective~\citep{Atanasov2024ScalingRenorm}. 
Recent work also extends this power-law spectral perspective to kernel ridge regression under anisotropic power-law data~\citep{WortsmanLoureiro2025} and spectral inheritance through random-feature maps~\citep{PaquetteXiaoZhu2026}. 
Related statistical-physics analyses study feature learning and specialization in finite-width teacher-student neural networks near interpolation~\citep{barbier2025statistical}, as well as scaling-law phase diagrams in shallow feature-learning networks~\citep{defilippis2026scaling}. 
A complementary line studies compute-optimal scaling in linear regression~\citep{Lin2024LinRegScaling} and precision-aware variants in high-dimensional linear regression~\citep{zhang2026precision}. 
Recent work also studies the origin of scaling laws in simplified language and sequence-modeling settings, including theories based on natural-language statistics~\citep{cagnetta2026deriving} and controlled graph/random-walk generative processes~\citep{barkeshli2026origin}.}

\JSCheck{Beyond Bayes-optimal analyses, several works study \emph{training dynamics}. 
Dynamical models of one-pass or finite-time training exhibit multiple scaling phases and shift compute-optimal allocations~\citep{Bordelon2024DynamicalScaling,Paquette2024Phases}. 
Feature learning can also change scaling exponents and compute-optimal behavior~\citep{bordelon2025featurelearning}. 
Data reuse and multi-epoch training shift effective exponents and optimal allocations~\citep{Lin2025DataReuse,yan2025largerdatasetsrepeatedmore}. 
Functional loss dynamics and learning-rate schedules in kernel or random-feature regression are studied in~\citep{li2025functional,bordelon2026optimal}, while early stopping in random-feature regression with power-law spectra is analyzed in~\citep{kramp2026dynamics}. 
Related analyses of SGD in shallow or two-layer networks with heterogeneous teacher components~\citep{RenNichaniWuLee2025}, quadratic feature-learning models~\citep{benarous2025quadratic}, or power-law data spectra~\citep{WorschechRosenow2024} derive sample-, time-, and parameter-dependent scaling exponents.}

Our work is motivated by a different axis: \emph{sparse or conditional feature activation}~\citep{Glorot2011DeepSparseRectifier,Olshausen1996SparseCodingNature}, in which only a subset of features is observed or used for any example. Such sparsity is common in modern representation learning and conditional computation (\textit{e.g.}, mixture-of-experts)~\citep{Shazeer2017MoE}. \JSCheck{Related empirical work studies scaling laws for sparse conditional computation
in mixture-of-experts models~\citep{ludziejewski2024finegrainedmoe,abnar2025parameters}.} Rather than refining existing spectral-decay accounts, we isolate how sparsity in what the dataset reveals can itself produce an intrinsic asymmetry between model-limited and data-limited scaling exponents.

\section{\JSCheck{Population Objective of Sparse Model}}\label{appendix:computation}

We begin by reformulating the objective in Eq.~\eqref{eq:bayes-loss} as the following loss function:
\begin{equation}
\begin{split}
    \ell \;&=\; \frac{\gamma}{D} \|\boldsymbol{\theta}\|_2^2 
    + \frac{1}{D} \sum_{a=1}^D \Bigl(y(\mathbf{x}_a) - \boldsymbol{\theta}^\top \mathbf{u} \mathbf{x}_a\Bigr)^2 \\
    &=\; \frac{\gamma}{D} \|\boldsymbol{\theta}\|_2^2 
    + \frac{1}{D} \sum_{a=1}^D \Bigl(\mathbf{w}^\top \mathbf{x}_a - \boldsymbol{\theta}^\top \mathbf{u} \mathbf{x}_a\Bigr)^2 \\
    &=\; \frac{\gamma}{D} \|\boldsymbol{\theta}\|_2^2 
    + \frac{1}{D} \sum_{a=1}^D \Bigl[
        \mathbf{w}^\top \mathbf{x}_a \mathbf{x}_a^\top \mathbf{w}
        - 2\, \boldsymbol{\theta}^\top \mathbf{u} \mathbf{x}_a \mathbf{x}_a^\top \mathbf{w}
        + \boldsymbol{\theta}^\top \mathbf{u} \mathbf{x}_a \mathbf{x}_a^\top \mathbf{u}^\top \boldsymbol{\theta}
    \Bigr], 
\end{split}
\label{eq:lossEquation}
\end{equation}
which we wish to minimize with respect to $\boldsymbol{\theta}$. While our focus is on the ridge-less limit, $\gamma=0$, we retain it for mathematical convenience and take the limit $\gamma \to 0$ when appropriate.

Using $\boldsymbol{\Phi} = \mathbf{u} \mathbf{X}$, we find the optimal solution $\boldsymbol{\theta}^\star$ satisfies:
\begin{equation}
    \boldsymbol{\theta}^{\star\!\top} = \mathbf{y} \boldsymbol{\Phi}^\top \mathbf{q} = \mathbf{y} \mathbf{Q} \boldsymbol{\Phi}^\top,
    \label{eq:thetaSolved}
\end{equation}
where
\begin{equation*}
\begin{split}
    \mathbf{q} &= (\gamma\, \mathbf{I}_N + \boldsymbol{\Phi} \boldsymbol{\Phi}^\top)^{-1}, \\
    \mathbf{Q} &= (\gamma\, \mathbf{I}_D + \boldsymbol{\Phi}^\top \boldsymbol{\Phi})^{-1}.
\end{split}
\end{equation*}
We are interested in the difference between the effective weight vector $\mathbf{w}_{\text{eff}}$, defined by
\[
\mathbf{w}_{\text{eff}}^{\top} = \boldsymbol{\theta}^{\star\!\top} \mathbf{u},
\]
and the true target weights $\mathbf{w}^{\top}$. In particular, the generalization error is given by
\begin{equation*}
    \mathcal{E}_{\text{gen}} = 
    \bigl(\mathbf{w}_{\text{eff}}^{\top} - \mathbf{w}^{\top}\bigr)
    \,\boldsymbol{\Sigma}_{\mathbf{x}}\,
    \bigl(\mathbf{w}_{\text{eff}} - \mathbf{w}\bigr).
\end{equation*}

Table~\ref{tab:matrixTable2} lists the shapes and elementwise scalings used in our computations.
\begin{table}[h]
\vspace{-3mm}
\caption{Shapes and elementwise scaling.}
\label{tab:matrixTable2}
\small
\setlength{\tabcolsep}{4pt}
\renewcommand{\arraystretch}{1.15}
\begin{tabularx}{\columnwidth}{
  >{\hsize=0.3\hsize\centering\arraybackslash}X
  >{\hsize=1.7\hsize\centering\arraybackslash}X
  >{\hsize=0.3\hsize\centering\arraybackslash}X
  >{\hsize=1.7\hsize\centering\arraybackslash}X
}
\toprule
Parameter & Description & Shape & Scaling (elementwise) \\
\midrule
$\mathbf{X}$ & Training data (columns are examples) & $M\times D$ & per-coordinate standard deviation $j^{-(\alpha_1+\alpha_2+2)/2}$ \\
$\mathbf{u}$ & Fixed random embedding & $N\times M$ & $u_{ij}\sim\mathcal{N}(0,1/N)$ i.i.d. \\
$\boldsymbol{\theta}$ & Trainable readout & $N\times 1$ & $O(1)$ \\
$\mathbf{w}$ & Target linear functional & $M\times 1$ & $O(1)$ \\
$\mathbf{y}$ & Target function & $1\times D$ & $O(1)$ \\
$\boldsymbol{\Phi}$ & Feature matrix $\mathbf{u}\mathbf{X}$ & $N\times D$ & N/A \\
$\mathbf{Q}$       & Feature Gram $(\gamma\mathbf{I}_D + \boldsymbol{\Phi}^{\top}\boldsymbol{\Phi})^{-1}$ & $D\times D$ & N/A\\
$\mathbf{q}$       & Dual Gram $(\gamma\mathbf{I}_N + \boldsymbol{\Phi}\boldsymbol{\Phi}^{\top})^{-1}$ & $N\times N$ & N/A\\
\bottomrule
\end{tabularx}
\vspace{-2mm}
\end{table}

\section{Scaling Asymptotics of Sparse Random Features}\label{App:Scaling}

\subsection{$-1\le \alpha_1 \le 0$: One-Exponent Scaling Law}\label{app:weak-sparsity}

In our sparse model of Section~\ref{Section:Ourmodel}, we obtain a symmetric one-exponent scaling law when $-1\le \alpha_1 \le 0$, similar to dense random feature models~\citep{Bahri2021ExplainingScaling, Maloney2022SolvableScaling}. The key point is that generalization can be bottlenecked by three resources: the number of parameters $N$, the number of samples $D$, and the set of input coordinates that are actually observed (activated at least once) in the $D$ training examples.

Recall that the activation probability of coordinate $j$ is
\[
p_j := \mathbb{P}(x_j\neq 0)=j^{-(\alpha_1+1)}.
\]
Across $D$ i.i.d.\ samples, let $N_j$ denote the number of times coordinate $j$ activates. Then
\[
\mathbb{P}(\text{coordinate $j$ is never active in the training set})
=\mathbb{P}(N_j=0)=(1-p_j)^D \approx e^{-Dp_j}.
\]
Thus, whenever $Dp_j\gg 1$ (equivalently, $D\,j^{-(\alpha_1+1)}\gg 1$), we have $\mathbb{P}(N_j=0)\le e^{-\Omega(1)}\ll 1$, so coordinate $j$ is activated at least once with high probability. \JSCheck{When $-1\le \alpha_1<0$, this condition holds uniformly for all $j$ up to order $D$. At the boundary $\alpha_1=0$, the coordinate $j\sim D$ has $Dp_j=O(1)$, so the never-observed probability is also $O(1)$; this boundary case only affects constants and does not change the one-exponent scaling.}


Therefore the Bayes-optimal loss is governed by the unmodeled-variance tail beyond the first $\min\{N,D\}$ coordinates:
\[
\ell_{\mathrm{Bayes}}(N,D)\;\asymp\;\sum_{j>\min\{N,D\}} \operatorname{Var}(x_j)
\;\asymp\;\sum_{j>\min\{N,D\}} j^{-(\alpha_1+\alpha_2+2)}
\;\asymp\;\min\{N,D\}^{-(\alpha_1+\alpha_2+1)},
\]
and the scaling is symmetric in $N$ and $D$ in this regime, with a single exponent $\alpha_N = \alpha_D = \alpha_1+\alpha_2+1$.

\subsection{$\alpha_1>0$: Two-Exponent Scaling Law}\label{appendix:TwoExpon}
Here we briefly elaborate on supplementary details omitted from the main text regarding the scaling asymptotics of the two-exponent scaling law.

In a random set of $D$ data points, only $O(D^{\frac{1}{1+\alpha_1}})$ features are likely to activate when $\alpha_1 > 0$. Thus, for positive $\alpha_1$, there are naively three distinct regimes:
\[
N < D^{\frac{1}{1+\alpha_1}}, \quad D^{\frac{1}{1+\alpha_1}} < N < D, \quad \text{and} \quad N > D.
\]
The first is unambiguously underparameterized, the third is clearly overparameterized, and the intermediate regime exhibits characteristics of both. 

\subsubsection{Underparameterized Regime}
We first consider the underparameterized regime \( N \ll D^{\frac{1}{1+\alpha_1}} \),  where learning is primarily limited by model capacity.

\paragraph{Proof of Proposition~\ref{prop:underparam}}

\begin{proof} A resolvent analysis of random-feature regression~\citep{Maloney2022SolvableScaling} reveals that the loss is governed by the covariance mass missed by an $N$-dimensional random feature subspace. In the ridgeless limit, the resolvent acts as a projection onto the latent directions inaccessible to the random features, so the leading-order loss is controlled by the unresolved tail of the covariance spectrum. Thus, at the level of scaling, the random-feature bottleneck resolves the leading $N$ spectral directions and leaves the remaining covariance tail unresolved. In the present regime $N\ll D^{1/(1+\alpha_1)}$, sparsity is masked, so the relevant spectrum is $\operatorname{Var}(x_j)=j^{-\alpha_1-\alpha_2-2}$. Hence the residual variance scales as
\begin{align*}
\sum_{j > N} j^{-\alpha_1 - \alpha_2 - 2}
&\;\asymp\;
\int_N^\infty j^{-\alpha_1 - \alpha_2 - 2}\, dj \\
&=
\frac{1}{\alpha_1 + \alpha_2 + 1}
\, N^{-(\alpha_1 + \alpha_2 + 1)}.
\end{align*}
Combining this with the general form of the Bayes-optimal loss 
(Eq.~\eqref{eq:bayes-tail}) yields~\eqref{eq:bayes-N-scaling-prop}.
\end{proof}

\subsubsection{Overparameterized Regime}
We now turn to the overparameterized regime \( N \gg D^{\frac{1}{1+\alpha_1}} \).

\paragraph{Heavily Overparameterized Regime}
We first focus on the maximally overparameterized case. We will show that in this regime, the performance of the random feature model is Bayes-optimal among all models, and we compute its generalization error.
 
\subparagraph{\emph{\JSCheck{Optimal Predictor}.}}

An upper bound on the performance of any model would be for $\weff$ to equal $\mathbf{w}$ on indices that appear in the dataset, and be $0$ on all other indices. We will show that this is indeed what happens in this regime.

If we write out our expression for $\weff$, we have
\begin{equation}
    \weff^{\top}=\mathbf{w}^{\top} \,\mathbf{X}\,\bigl(\gamma \mathbf{I}_D+\mathbf{X}^{\top} \mathbf{u}^{\top}\mathbf{u}\,\mathbf{X}\bigr)^{-1}\,\mathbf{X}^{\top} \mathbf{u}^{\top} \mathbf{u},
    \label{eq:weffEquation4Real}
\end{equation}
If we post-multiply by $\mathbf{X}\,\mathbf{b}$ for any vector $\mathbf{b}$, we obtain
\begin{equation*}
    \weff^{\top} \,\mathbf{X}\,\mathbf{b}=\mathbf{w}^{\top}\,\mathbf{X}\,\mathbf{b}.
\end{equation*}
This follows from the full rank of $\mathbf{Q}$. Thus, in the overparameterized regime, for every possible input in the subspace spanned by the training data, the random feature model incurs zero loss.

\subparagraph{\emph{Coefficients Outside the Training Data Go to $0$.}}
\JSCheck{While the analysis showing that \(\weff\) perfectly interpolates the training data applies for any overparameterized model in the limit \(\gamma\to0\), the argument in this subsection uses the stronger assumption that we are deep in the overparameterized regime. Since \(u_{ij}\sim\mathcal{N}(0,1/N)\), the diagonal entries of \(\mathbf{u}^\top\mathbf{u}\) are \(1+O_{\mathbb{P}}(N^{-1/2})\), while the off-diagonal entries are \(O_{\mathbb{P}}(N^{-1/2})\). Thus, for any subspace \(S\) with \(\dim(S)\ll N\), the restriction of \(\mathbf{u}^\top\mathbf{u}\) to \(S\) is \(\mathbf{I}_S+o_{\mathbb{P}}(1)\). Since the row and column spaces of \(\mathbf{X}\) are at most \(D\)-dimensional, when \(D\ll N\) we can replace the occurrences of \(\mathbf{u}^\top\mathbf{u}\) in Eq.~\eqref{eq:weffEquation4Real} by \(\mathbf{I}\). Therefore, \(\weff\) simplifies to}
\begin{equation*}
    \weff^{\top}=\mathbf{w}^{\top} \,\mathbf{X}\,(\gamma \mathbf{I}_D+\mathbf{X}^{\top}\mathbf{X})^{-1}\,\mathbf{X}^{\top}.
    \label{eq:weffEquation-simplified}
\end{equation*}
In the limit $\gamma\to 0$, this is precisely the projection of $\mathbf{w}^{\top}$ onto the space spanned by the data. We can see this using a singular value decomposition.

\noindent\emph{Proof.}
Let the singular value decomposition (SVD) of the data matrix be $\mathbf{X} = U \Sigma V^\top$, where $\Sigma$ is the diagonal matrix of singular values $\sigma_i$. Substituting this into the term acting on $\mathbf{w}^\top$ yields
\begin{align*}
\mathbf{X}\,(\gamma \mathbf{I}_D+\mathbf{X}^\top \mathbf{X})^{-1}\mathbf{X}^\top
&= U \Sigma V^\top \bigl(\gamma \mathbf{I}_D + V \Sigma^\top \Sigma V^\top \bigr)^{-1} V \Sigma^\top U^\top \\
&= U \Sigma V^\top \bigl[ V (\gamma \mathbf{I}_D + \Sigma^\top \Sigma) V^\top \bigr]^{-1} V \Sigma^\top U^\top \\
&= U \underbrace{\Sigma (\gamma \mathbf{I}_D + \Sigma^\top \Sigma)^{-1} \Sigma^\top}_{\Lambda_\gamma} U^\top.
\end{align*}
The inner matrix $\Lambda_\gamma$ is diagonal with entries $(\Lambda_\gamma)_{ii} = \sigma_i^2/(\gamma + \sigma_i^2)$. Taking the limit $\gamma \to 0$ we find
\[
\lim_{\gamma \to 0} \frac{\sigma_i^2}{\gamma + \sigma_i^2} =
\begin{cases}
1, & \text{if } \sigma_i \neq 0,\\
0, & \text{if } \sigma_i = 0.
\end{cases}
\]
Thus, $\lim_{\gamma \to 0} \Lambda_\gamma$ acts as the identity on the subspace associated with non-zero singular values. Consequently, the full expression becomes $U_{\mathrm{range}} U_{\mathrm{range}}^\top$, which is the orthogonal projection operator onto the column space of $\mathbf{X}$. Therefore, in this limit, $\weff^{\top}$ is precisely the projection of $\mathbf{w}^{\top}$ onto the space spanned by the training data.

\subparagraph{\emph{Expected Test \JSCheck{Loss}.}}

In the heavily overparameterized limit, the distribution of $\mathbf{w}-\weff$ is straightforward. The $j$-th coefficient is zero if feature $j$ appears at least once in the dataset, and is normally distributed with variance~$1$ if feature $j$ does not appear. The contribution to the test \JSCheck{loss} is thus
\begin{equation*}
    \ell_{\mathrm{test}}=\mathcal{E}_{\text{gen}}=\sum_j \Bigl(\mathbb{E}_{\mathrm{test}}[x_j^2]\Bigr)\,\Bigl(\mathbb{E}_{\mathrm{train}}\bigl[(\mathbf{w}-\weff)_j^2\bigr]\Bigr).
\end{equation*}

\paragraph{Proof of Lemma~\ref{lem:learnable-coordinates}}

Here we derive the asymptotic expression for the expected number of coordinates that
are active at least once across $D$ i.i.d.\ samples under the sparse activation
model.

\begin{proof}[Proof (learnable-coordinate count $K(D)$)]
In the sparse activation model, coordinate $j$ is active in a single sample
with probability $p_j = j^{-\alpha_1 - 1}$ and inactive with probability
$1 - p_j$. The probability that coordinate $j$ is never activated in $D$
independent samples is
\[
\mathbb{P}_{\mathrm{never}}(j)
= (1 - p_j)^D.
\]
For $p_j \ll 1$, we use $(1 - p_j)^D \approx e^{-D p_j}$ as $D \to \infty$; this
follows from $(1 - \frac{D p_j}{D})^D \to e^{-D p_j}$. Hence
\[
\mathbb{P}_{\mathrm{never}}(j)
\approx \exp(-D p_j)
= \exp\!\left(-D j^{-\alpha_1 - 1}\right).
\]
Thus the probability that coordinate $j$ is activated at least once is
$1 - \exp(-D j^{-\alpha_1 - 1})$, and the expected number of such coordinates
is
\[
K(D)
\;\approx\;
\sum_{j=1}^\infty
\bigl(1 - e^{-D j^{-\alpha_1-1}}\bigr).
\]
Passing to the continuum limit, we obtain
\[
K(D)
\;\asymp\;
\int_0^\infty \bigl(1 - e^{-D j^{-\alpha_1-1}}\bigr)\, dj.
\]
We now perform the change of variables
\[
t = D j^{-(\alpha_1+1)}
\quad\Longrightarrow\quad
j = \left(\frac{D}{t}\right)^{\frac{1}{\alpha_1+1}},
\quad
dj
= -\frac{1}{\alpha_1+1}
D^{\frac{1}{\alpha_1+1}}
t^{-\frac{\alpha_1+2}{\alpha_1+1}}\, dt,
\]
which yields
\begin{align*}
K(D)
&\;\asymp\;
\frac{1}{\alpha_1+1}
D^{\frac{1}{\alpha_1+1}}
\int_0^\infty \bigl(1 - e^{-t}\bigr) t^{-\frac{\alpha_1+2}{\alpha_1+1}} \, dt.
\end{align*}
Let $\beta \equiv \frac{1}{\alpha_1+1}\in(0,1)$. By integration by parts,
\[
\int_0^\infty (1-e^{-t})\, t^{-1-\beta}\,dt
= \frac{1}{\beta}\int_0^\infty e^{-t} t^{-\beta}\,dt
= \frac{\Gamma(1-\beta)}{\beta}.
\]
Substituting $\beta=\frac{1}{\alpha_1+1}$ gives
\[
K(D)
\;\asymp\;
\Gamma\!\left(1-\frac{1}{\alpha_1+1}\right)\,
D^{\frac{1}{\alpha_1+1}}.
\]
This matches Eq.~\eqref{eq:K-of-D} in Lemma~\ref{lem:learnable-coordinates}.
\end{proof}

\paragraph{Proof of Theorem~\ref{thm:over-scaling}}\label{App:rig-proof}
\JSCheck{Here we give two derivations of Theorem~\ref{thm:over-scaling}: a discrete argument via the cutoff $K(D)$, which provides useful intuition, and a fully rigorous continuous argument.}

\begin{proof}[\JSCheck{Informal discrete argument via $K(D)$}]
By Lemma~\ref{lem:learnable-coordinates}, when $\alpha_1>0$, at most $K(D)$ coordinates of the
target vector $\mathbf{w}$ can be reliably estimated from $D$ samples. All
coordinates with indices $j > K(D)$ remain unobserved and therefore unmodeled.
By the unmodeled-variance principle (Eq.~\eqref{eq:bayes-tail}), the
Bayes-optimal loss is
\[
\ell_{\mathrm{Bayes},D}
\;\asymp\;
\sum_{j > K(D)} j^{-\alpha_1 - \alpha_2 - 2}.
\]
Approximating the sum by an integral,
\[
\begin{aligned}
\sum_{j > K(D)} j^{-\alpha_1 - \alpha_2 - 2}
&\;\asymp\;
\int_{K(D)}^\infty j^{-\alpha_1 - \alpha_2 - 2}\, dj
\\
&=
\frac{1}{\alpha_1 + \alpha_2 + 1}\,
K(D)^{-(\alpha_1+\alpha_2+1)}.
\end{aligned}
\]
Substituting the scaling $K(D) \asymp D^{1/(\alpha_1+1)}$ from
Eq.~\eqref{eq:K-of-D} yields
\[
\ell_{\mathrm{Bayes},D}
\;\asymp\;
D^{-(\alpha_1+\alpha_2+1)/(\alpha_1+1)},
\]
which is Eq.~\eqref{eq:bayes-loss-over}.
\end{proof}

\begin{proof}[\JSCheck{Rigorous} proof (continuous argument)]
An equivalent expression for the Bayes-optimal loss can be written as a continuous expectation over coordinates, weighted by the probability that each
coordinate is never activated. This leads to the integral representation
\begin{equation}
\ell_{\mathrm{Bayes},D}
=
\int_{0}^{\infty}
\exp\!\left(-D j^{-\alpha_1 - 1}\right)\,
j^{-(\alpha_1 + \alpha_2 + 2)}\, dj,
\label{eq:bayes-integral-over}
\end{equation}

We again apply the change of variables
\[
t = D j^{-(\alpha_1+1)}
\quad\Longrightarrow\quad
j = \left(\frac{D}{t}\right)^{\frac{1}{\alpha_1+1}},
\quad
dj
= -\frac{1}{\alpha_1+1}
D^{\frac{1}{\alpha_1+1}}
t^{-\frac{\alpha_1+2}{\alpha_1+1}}\, dt.
\]
Substituting into \eqref{eq:bayes-integral-over} gives
\[
\ell_{\mathrm{Bayes},D}
=
\frac{1}{\alpha_1+1}
D^{-\frac{\alpha_1+\alpha_2+1}{\alpha_1+1}}
\int_0^\infty
e^{-t}\,
t^{\frac{\alpha_1+\alpha_2+1}{\alpha_1+1}-1}\, dt.
\]
The remaining integral is exactly the Gamma function,
\[
\int_0^\infty e^{-t} t^{\beta-1}\, dt = \Gamma(\beta),
\]
with
\[
\beta = \frac{\alpha_1+\alpha_2+1}{\alpha_1+1}.
\]
Hence
\[
\ell_{\mathrm{Bayes},D}
=
\frac{1}{\alpha_1+1}
\Gamma\!\left(
\frac{\alpha_1+\alpha_2+1}{\alpha_1+1}
\right)
D^{-\frac{\alpha_1+\alpha_2+1}{\alpha_1+1}},
\]
which matches the scaling law in Eq.~\eqref{eq:bayes-loss-over} in the main text. When $\alpha_1>0$, sparsity reduces the effective number of observed coordinates to scale as $D^{1/(\alpha_1+1)}$, thereby becoming the dominant data bottleneck. This concludes the proof.
\end{proof}

\noindent\emph{Subtleties Concerning Theorem~\ref{thm:over-scaling}: Co-activation and Identifiability.} A subtle complication arises when two rare coordinates co-activate, \textit{i.e.}, are both nonzero in the same training example. In such cases, it becomes impossible to disentangle their individual contributions to the label, creating ambiguity in estimating the corresponding weights. The coordinates that activate exactly once have indices on the order of
$j \sim D^{\frac{1}{\alpha_1 + 1}}$. Consequently, the total number of such
coordinates also scales as $D^{\frac{1}{\alpha_1 + 1}}$. A nonzero number of
pairwise co-activations is therefore likely whenever this quantity exceeds
$D^{1/2}$ (by the birthday paradox), which occurs when $\alpha_1 < 1$.
However, the number of co-activating coordinates itself scales only as
\[
\min\left\{ \frac{\left(D^{\frac{1}{\alpha_1 + 1}}\right)^2}{D},\;
D^{\frac{1}{\alpha_1 + 1}} \right\}
= \min\left\{ D^{\frac{2}{\alpha_1 + 1} - 1},\;
D^{\frac{1}{\alpha_1 + 1}} \right\}
= \min\left\{ D^{\frac{1 - \alpha_1}{\alpha_1 + 1}},\;
D^{\frac{1}{\alpha_1 + 1}} \right\}.
\]
Each such coordinate contributes variance on the order of
$D^{\frac{-\alpha_1 - \alpha_2 - 2}{\alpha_1 + 1}}$, so the total variance
contributed by co-activating coordinates scales as
\[
\min\left\{
D^{\frac{-2\alpha_1 - \alpha_2 - 1}{\alpha_1 + 1}},\;
D^{\frac{-\alpha_1 - \alpha_2 - 1}{\alpha_1 + 1}}
\right\},
\]
which vanishes as $D \to \infty$. Thus, while co-activation may create isolated ambiguity, it does not affect the asymptotic loss scaling in Theorem~\ref{thm:over-scaling}.

\section{\JSCheck{Compute}}\label{appendix:compute}

This section provides an overview of the compute model we adopt for studying the compute-optimal frontier under training to completion:
\begin{equation}
    C = ND \cdot \min\{N,D\}.
\end{equation}

\subsection{Computational Efficiency}

Here, ``compute'' refers to the leading-order training cost, measured in FLOPs up to hardware-dependent constants. To approach the population-loss scaling studied in the main text, we focus on the setting in which the readout is trained to convergence. This can be done either with gradient-based methods, which are closer to practical training, or by directly computing the min-norm solution in Eq.~\eqref{eq:thetaSolved}.

\subsubsection{Gradient-Based Methods}

A single gradient step on the loss~\eqref{eq:lossEquation} costs $O(ND)$. The iteration complexity is controlled by the condition number of the feature Gram matrix, defined on its nonzero spectrum,
$\kappa(\boldsymbol{\Phi}\boldsymbol{\Phi}^\top)
=
\lambda_{\max}(\boldsymbol{\Phi}\boldsymbol{\Phi}^\top)/
\lambda_{\min}^+(\boldsymbol{\Phi}\boldsymbol{\Phi}^\top)$,
where $\lambda_{\min}^+$ denotes the smallest nonzero eigenvalue.
For plain GD, the number of steps scales linearly in $\kappa$, up to logarithmic accuracy factors. Accelerated first-order methods improve this dependence to $\sqrt{\kappa}$. We therefore write the leading gradient-based training cost as
\begin{equation}
    C_{\mathrm{grad}} \sim ND\,\kappa^p,
\end{equation}
with $p=1$ for GDand $p=1/2$ for accelerated methods.

\paragraph{Condition number, $\kappa$.}
We estimate the condition number from the effective variance profile of the active features. The number of active coordinates in a dataset of size $D$ is $K(D)$, so the readout effectively uses roughly $\min\{N,K(D)\}$ features. The variance explained by the smallest feature scales as $\min\{N,K(D)\}^{-(\alpha_1+\alpha_2+2)}$, while the largest is $O(1)$.
Thus the effective condition number scales as
\begin{equation}
\kappa \sim \min\{N,K(D)\}^{\alpha_1+\alpha_2+2}.
\end{equation}
Consequently, if the iteration complexity scales as \(\kappa^p\), the number of optimization steps scales as $\min\{N,K(D)\}^{(\alpha_1+\alpha_2+2)p}$.

\subsubsection{Direct Least-Squares Solve}
Eq.~\eqref{eq:thetaSolved} gives two equivalent expressions for $\boldsymbol{\theta}^\star$, related by a standard matrix identity. Using the first expression, $\mathbf{y}\,\boldsymbol{\Phi}^\top \mathbf{q}$, requires computing $\boldsymbol{\Phi}\boldsymbol{\Phi}^\top$ (cost $N^2D$), inverting it (cost $N^3$) to form $\mathbf{q}$, computing $\mathbf{y}\boldsymbol{\Phi}^\top$ (cost $ND$), and then multiplying the resulting row vector by $\mathbf{q}$ (cost $N^2$). The total cost is therefore $O(N^2D+N^3)$. By the same logic, using the second expression, $\mathbf{y}\,\mathbf{Q}\,\boldsymbol{\Phi}^\top$, costs $O(D^2N+D^3)$. The cheaper of these two options is
\[
C_{\mathrm{direct}}  \sim ND \cdot \min\{N,D\}.
\]

We summarize these estimates in Table~\ref{tab:threeRegimes}, which compares the costs of gradient descent (GD), accelerated first-order methods, and direct solvers across the three regimes.
\begin{table}[H]
\vspace{-4mm}
    \centering
    \caption{Compute cost, normalized by $ND$, across the three parameter regimes. The GD and accelerated columns assume iteration complexity scaling as $\kappa^p$, with $p=1$ for GD and $p=1/2$ for accelerated first-order methods such as Nesterov acceleration. The direct-solve column corresponds to computing the min-norm least-squares solution in closed form.}
    \label{tab:threeRegimes}
    \small
    \setlength{\tabcolsep}{4pt}
    \renewcommand{\arraystretch}{1.15}
    \begin{tabularx}{\columnwidth}{
        >{\hsize=0.8\hsize\centering\arraybackslash}X
        >{\hsize=0.4\hsize\centering\arraybackslash}X
        >{\hsize=0.4\hsize\centering\arraybackslash}X
        >{\hsize=0.4\hsize\centering\arraybackslash}X
        }
        \toprule
         Regime 
         & GD 
         & Accelerated 
         & Direct solve \\
         \midrule
         Heavily overparameterized, $N>D$ 
         & $D^{\frac{\alpha_1+\alpha_2+2}{\alpha_1+1}}$ 
         & $D^{\frac{\alpha_1+\alpha_2+2}{2(\alpha_1+1)}}$ 
         & $D$ \\
         Intermediate, $D>N>K(D)$ 
         & $D^{\frac{\alpha_1+\alpha_2+2}{\alpha_1+1}}$ 
         & $D^{\frac{\alpha_1+\alpha_2+2}{2(\alpha_1+1)}}$ 
         & $N$ \\
         Underparameterized, $K(D)>N$ 
         & $N^{\alpha_1+\alpha_2+2}$ 
         & $N^{\frac{\alpha_1+\alpha_2+2}{2}}$ 
         & $N$ \\
         \bottomrule
    \end{tabularx}
\end{table}

\subsection{Proxy Compute Model}

Table~\ref{tab:threeRegimes} implies different computational scalings depending on both the solver and the regime.

Let $a=\alpha_1+\alpha_2+2$ and $b=\alpha_1+1$, with $b>0$. The comparison between direct solvers and first-order methods is solver-dependent. Since accelerated methods reduce the condition-number dependence from $\kappa$ to $\kappa^{1/2}$, they are asymptotically cheaper than GD whenever $a>0$, up to constants. The direct solve is cheaper than GD in the heavily overparameterized and intermediate regimes when $a/b>1$, equivalently $\alpha_2+1>0$, while in the underparameterized regime the finite-variance condition $a>1$ already guarantees that the direct solve is cheaper than GD.

The comparison with accelerated methods is stricter. In the heavily overparameterized regime, the direct solve beats acceleration only when $a/(2b)>1$, equivalently $\alpha_2>\alpha_1$. In the underparameterized regime, it beats acceleration only when $a/2>1$, equivalently $\alpha_1+\alpha_2>0$. Thus direct solvers are uniformly cheapest under the stronger condition
\begin{equation}
\alpha_2>\alpha_1
\qquad\text{and}\qquad
\alpha_1+\alpha_2>0.
\end{equation}
There is also a mixed regime in which GD is the most expensive method, but the direct solve beats accelerated methods only in part of the phase diagram. For example, when $\alpha_1>0$ and
\begin{equation}
\max\{-1,-\alpha_1\}<\alpha_2<\alpha_1,
\end{equation}
acceleration is cheaper than the direct solve in the heavily overparameterized regime, while the direct solve is cheaper in the underparameterized regime. In the intermediate regime, the crossover occurs at $N_c(D)\sim D^{(\alpha_1+\alpha_2+2)/(2(\alpha_1+1))}$: direct solution is cheaper for $N<N_c(D)$, while acceleration is cheaper for $N>N_c(D)$.

For example, the representative choice $(\alpha_1,\alpha_2)=(1,0.3)$ used in our experiments lies in this mixed regime: GD is asymptotically the most expensive method, while the cheapest method switches from acceleration in the heavily overparameterized regime to direct solution in the underparameterized regime, with a crossover between the two in the intermediate regime.

This solver dependence motivates using a simple unified compute proxy,
\begin{equation}
C=ND\cdot \min\{N,D\},
\label{eq:proxy}
\end{equation}
rather than tying the main scaling analysis to a particular optimization algorithm. The qualitative conclusion we emphasize below is that sparsity shifts the compute-optimal allocation toward data, and this conclusion does not depend on these solver-dependent distinctions within reasonable compute models.

\subsubsection{Robustness to Alternative Compute Models}

The solver-dependent comparisons above change the mapping from $(N,D)$ to compute, but not the basic allocation principle. The loss has the form
\begin{equation}
\ell(N,D)\sim N^{-\alpha_N}+D^{-\alpha_D},
\end{equation}
so the compute-optimal allocation balances the two terms:
\begin{equation}
N^{-\alpha_N}\sim D^{-\alpha_D}
\qquad\Longrightarrow\qquad
D\sim N^{\alpha_N/\alpha_D}=N^{\alpha_1+1}.
\end{equation}
Thus sparsity fixes the relative allocation between data and model size independently of the particular solver. Different compute models only change how this balanced allocation scales with the total budget $C$.

For example, suppose that in the underparameterized branch the compute model takes the more general form
\begin{equation}
C \sim N^rD .
\end{equation}
Combining this with $D\sim N^{\alpha_1+1}$ gives
\begin{equation}
N^*(C)\sim C^{1/(r+\alpha_1+1)},
\qquad
D^*(C)\sim C^{(\alpha_1+1)/(r+\alpha_1+1)},
\end{equation}
and
\begin{equation}
\ell^*(C)\sim C^{-\alpha_N/(r+\alpha_1+1)}.
\end{equation}
The direct-solve proxy used in Proposition~\ref{prop:compute-optimal} corresponds to $r=2$, recovering Eq.~\eqref{eq:compute-opt-ND}. An accelerated first-order method in the same branch would instead have $r=1+\frac{\alpha_1+\alpha_2+2}{2}$,
which changes the numerical compute exponent but leaves the balanced allocation
$D^*\sim (N^*)^{\alpha_1+1}$ unchanged.

This distinction is especially useful in the mixed regime discussed above, where the cheapest solver can switch across the $(N,D)$ plane. In that regime, acceleration is cheaper than the direct solve only in sufficiently model-heavy regions. The crossover in the intermediate regime occurs at $N_c(D)\sim D^{\frac{\alpha_1+\alpha_2+2}{2(\alpha_1+1)}}$.
By contrast, the loss-balanced allocation satisfies $N_{\mathrm{bal}}(D)\sim D^{1/(\alpha_1+1)}$. In the mixed regime with $\alpha_1+\alpha_2>0$, we have $\frac{1}{\alpha_1+1} < \frac{\alpha_1+\alpha_2+2}{2(\alpha_1+1)}$,
so $N_{\mathrm{bal}}(D)\ll N_c(D)$ asymptotically. Therefore the compute-optimal allocation lies on the direct-solve side of the crossover, even though acceleration is cheaper in more heavily overparameterized regions.

For the representative experimental choice $(\alpha_1,\alpha_2)=(1,0.3)$, this comparison gives
$N_{\mathrm{bal}}(D)\sim D^{1/2}$ and $N_c(D)\sim D^{0.825}$. Thus the balanced allocation is well below the accelerated/direct crossover. Consequently, even under a piecewise cheapest-solver compute model, the asymptotic compute-optimal allocation is governed by the same direct-solve branch used in Proposition~\ref{prop:compute-optimal}. More generally, alternative solver models can modify the compute exponent $\alpha_C$, but they do not alter the qualitative conclusion that sparsity shifts the compute-optimal allocation toward data.

\subsection{Compute-Optimal Allocation}
Proposition~\ref{prop:compute-optimal} adopts the proxy compute budget in Eq.~\eqref{eq:proxy},
$C = ND \cdot \min\{N,D\}$, and shows that the unique compute-optimal allocation lies in the underparameterized regime, $N<D$. The optimal allocation and resulting loss scale as
\begin{equation}
\begin{split}
\ell^*(C) &\sim C^{-\alpha_C}, \\
N^*(C) &\sim C^{\alpha_D/(\alpha_N+2\alpha_D)}
= C^{1/(\alpha_1+3)}, \qquad
D^*(C) \sim C^{\alpha_N/(\alpha_N+2\alpha_D)}
= C^{(\alpha_1+1)/(\alpha_1+3)},\\
\alpha_C &:= \frac{\alpha_N\alpha_D}{\alpha_N+2\alpha_D}
= \frac{\alpha_1+\alpha_2+1}{\alpha_1+3} > 0 .
\end{split}
\label{eq:compute-opt-ND-app}
\raisetag{4ex}
\end{equation}

\subsubsection{Sketch of Proposition~\ref{prop:compute-optimal}}
\label{app:optimal-derivation}

\begin{proof}[Sketch (compute-optimal scaling)]
We optimize the loss under the compute model
\[
C = ND\cdot \min\{N,D\},
\]

\paragraph{Underparameterized Side ($N<D$).}
Here the compute constraint becomes
\[
C = ND\cdot N = N^2D.
\]
Assuming the population loss when $\alpha_1>0$ takes the form
\[
\ell(N,D)\sim N^{-\alpha_N}+D^{-\alpha_D},
\]
we eliminate $D$ using the compute constraint, $D = \frac{C}{N^2}$:
\[
\ell(C,N)\;\sim\; N^{-\alpha_N}+\left(\frac{C}{N^2}\right)^{-\alpha_D}
\;=\; N^{-\alpha_N}+C^{-\alpha_D}N^{2\alpha_D}.
\]
For fixed $C$, the first term decreases with $N$ while the second increases
with $N$ (since larger $N$ forces smaller $D=C/N^2$). Thus the optimum balances
the two contributions. Equating their scalings,
\[
N^{-\alpha_N}\;\asymp\; C^{-\alpha_D}N^{2\alpha_D}
\quad\Longleftrightarrow\quad
N^{\alpha_N+2\alpha_D}\;\asymp\; C^{\alpha_D},
\]
yields
\[
N^*(C)\;\asymp\; C^{\frac{\alpha_D}{\alpha_N+2\alpha_D}},
\qquad
D^*(C)=\frac{C}{(N^*(C))^2}\;\asymp\; C^{1-\frac{2\alpha_D}{\alpha_N+2\alpha_D}}.
\]
At this optimum, both loss terms scale equally, so
\[
\ell^*(C)\;\sim\; (N^*(C))^{-\alpha_N}
\;\asymp\;
C^{-\alpha_C},
\qquad
\alpha_C=\frac{\alpha_N\alpha_D}{\alpha_N+2\alpha_D},
\]
which gives the exponents reported in Eq.~\eqref{eq:compute-opt-ND-app}.

\paragraph{Overparameterized Side ($D<N$).}
In this case $\min\{N,D\}=D$ and the compute constraint becomes $C=ND^2$.
Optimizing under this branch produces a candidate scaling that is not
self-consistent with $D<N$, and is therefore ruled out by Remark~\ref{Remark:ComputeOverParam}, so the unique compute-optimal allocation lies in the
underparameterized regime.

\end{proof}

\section{Gradient Descent}\label{appendix:GD}

Gradient descent (GD) provides a natural procedure for minimizing the training loss. In favorable settings, it converges to the trained-to-completion min-norm solution, though rare failures can occur due to sparsity as shown in the main text. Below, we provide a proof of Proposition~\ref{thm:gd_eta2_hpg} and a proof sketch for Theorem~\ref{thm:gd_failure_scaling}, which together summarize our main findings on GD, as reported in the main text.

\subsection{Proof of Proposition~\ref{thm:gd_eta2_hpg}}\label{subsec:thm:gd_eta2_hpg}
\begin{proof}[Proof (spectral bound for stability)]
Under the near-isometry approximation on $\mathrm{span}(\mathbf{X})$, we may replace
$\mathbf{u}^\top\mathbf{u}$ by $\mathbf{I}_M$ inside $\mathbf{u}^\top\mathbf{u}\,\mathbf{X}\mathbf{X}^\top$,
so it suffices to control $\lambda_{\max}\!\big(\tfrac{1}{D}\mathbf{X}\mathbf{X}^\top\big)$.

If empirical second moments concentrate, then $\tfrac{1}{D}\mathbf{X}\mathbf{X}^\top\approx\boldsymbol{\Sigma}_x$
with $\boldsymbol{\Sigma}_x=\mathbb{E}[\mathbf{x}\mathbf{x}^\top]$ diagonal and
$\mathrm{Var}(x_j)=j^{-(\alpha_1+\alpha_2+2)}$, whose top eigenvalue equals $\mathrm{Var}(x_1)=1$. Although $\boldsymbol{\Sigma}_x$ is diagonal, $\tfrac{1}{D}\mathbf{X}\mathbf{X}^\top$ need not be. The off-diagonal entries have mean zero and concentrate entrywise; we assume that their aggregate contribution is subleading, so they do not change the leading $O(1)$ scale of the top eigenvalue governed by the diagonal part. The only potential obstruction is an anomalously large diagonal entry caused by a single rare activation at a very large index when $\alpha_2<-1$ (so activated amplitudes grow with $j$).

Let $j_{\max}$ be the largest index that activates at least once in the dataset.
For a given threshold $J$, the expected number of activations at indices larger than $J$ is
\[
D\sum_{j>J} \mathbb{P}(x_j\neq 0)
= D\sum_{j>J} j^{-(\alpha_1+1)}
\;\asymp\; \frac{1}{\alpha_1}D\,J^{-\alpha_1}.
\]
Equivalently, writing
\[
N_{>J}:=\sum_{d=1}^D\sum_{j>J}\mathbf{1}\{x^{(d)}_j\neq 0\},
\]
a Poisson approximation yields
\[
\mathbb{P}(N_{>J}=0)\approx\exp\!\left(-\frac{D}{\alpha_1}J^{-\alpha_1}\right),
\qquad
\mathbb{P}(j_{\max}\le J)=\mathbb{P}(N_{>J}=0)\approx\exp\!\left(-\frac{D}{\alpha_1}J^{-\alpha_1}\right),
\]
and hence
\[
\mathbb{P}(j_{\max}=J)\approx D\,J^{-(\alpha_1+1)}\exp\!\left(-\frac{D}{\alpha_1}J^{-\alpha_1}\right).
\]
In particular, $j_{\max}\asymp D^{1/\alpha_1}$ with high probability.

Conditional on an activation at $j_{\max}$, the corresponding diagonal contribution to
$\tfrac{1}{D}\mathbf{X}\mathbf{X}^\top$ is of order
\[
\frac{1}{D}x_{j_{\max}}^2
=\frac{1}{D}j_{\max}^{-(\alpha_2+1)}
\sim  D^{-(\alpha_1+\alpha_2+1)/\alpha_1}.
\]
But, given that $\alpha_2<-1$, and since $\alpha_1+\alpha_2+1>0$, we have $D^{-(\alpha_1+\alpha_2+1)/\alpha_1}<1$, so such rare spikes cannot dominate the
$O(1)$ contribution from the leading coordinates (in particular $j=1$). Therefore
$\lambda_{\max}\!\big(\tfrac{1}{D}\mathbf{X}\mathbf{X}^\top\big)=1+o_{\mathbb{P}}(1)$, and the same holds for
$\lambda_{\max}\!\big(\tfrac{1}{D}\mathbf{u}^\top\mathbf{u}\,\mathbf{X}\mathbf{X}^\top\big)$ under near-isometry.
The conclusion of Proposition~\ref{thm:gd_eta2_hpg} then follows from the usual spectral stability condition on the step size.
\end{proof}

\subsection{Sketch of Theorem~\ref{thm:gd_failure_scaling}}

\begin{proof}[Sketch (rare-activation spike)]
We focus on the event that a single ``late'' coordinate activation produces a spiked top eigenvalue.
Write the data matrix as $\mathbf{X}=[\,\mathbf{X}_{\rm reg}\ \ \mathbf{x}_{\rm spec}\,]$, where
$\mathbf{X}_{\rm reg}\in\mathbb{R}^{M\times(D-1)}$ contains the regular samples and
$\mathbf{x}_{\rm spec}\in\mathbb{R}^{M}$ is a special sample containing a single unusually large activation.
Then
\[
\mathbf{X}\mathbf{X}^\top
=
\mathbf{X}_{\rm reg}\mathbf{X}_{\rm reg}^\top+\mathbf{x}_{\rm spec}\mathbf{x}_{\rm spec}^\top.
\]
In a spiked covariance scenario, the top eigenvector aligns with $\mathbf{x}_{\rm spec}$ and the
corresponding eigenvalue is bounded below by the Rayleigh quotient
\begin{equation}
\lambda_{\max}(\mathbf{X}\mathbf{X}^\top)
\geq
\frac{\mathbf{x}_{\rm spec}^\top \mathbf{X}\mathbf{X}^\top \mathbf{x}_{\rm spec}}{\|\mathbf{x}_{\rm spec}\|_2^2}
=
\frac{\mathbf{x}_{\rm spec}^\top \mathbf{X}_{\rm reg}\mathbf{X}_{\rm reg}^\top \mathbf{x}_{\rm spec}}{\|\mathbf{x}_{\rm spec}\|_2^2}
+\|\mathbf{x}_{\rm spec}\|_2^2.
\label{eq:rayleigh_spike}
\end{equation}

\JSCheck{Under the finite-variance condition $\alpha_1+\alpha_2+1>0$, the first term in \eqref{eq:rayleigh_spike} is  $o(D)$ at the rare-spike scale}, so the condition for GD instability,
$\lambda_{\max}\!\left(\mathbf{X}\mathbf{X}^\top\right)>\frac{2D}{\eta}$, is to leading order in $D$
equivalent to requiring
\begin{equation*}
\|\mathbf{x}_{\rm spec}\|_2^2 \;\gtrsim\; \frac{2D}{\eta}.
\label{eq:norm_condition}
\end{equation*}
In our model, a single-coordinate activation at index $j$ has magnitude
$\|\mathbf{x}_{\rm spec}\|_2^2\approx x_j^2=j^{-(\alpha_2+1)}$. Hence \eqref{eq:norm_condition} requires
\begin{equation}
j \;\gtrsim\; j^\ast
:=
\left(\frac{2D}{\eta}\right)^{\frac{1}{-\alpha_2-1}}
\qquad(\alpha_2<-1).
\label{eq:j_star}
\end{equation}
Finally, let $N_{>j^\ast}$ denote the (random) number of activations with index larger than $j^\ast$ in the
dataset. By definition of $j^\ast$, the rare-spike divergence event is the existence of at least one activation with index $>j^\ast$, so $\mathbb{P}_{\mathrm{rare}}(\mathrm{diverge})\asymp \mathbb{P}(N_{>j^\ast}\geq 1)$.

The expected number of activations at indices larger than $j^\ast$ is
\[
\mathbb{E}[N_{>j^\ast}]
=
D\sum_{j>j^\ast}\mathbb{P}(x_j\neq 0)
=
D\sum_{j>j^\ast} j^{-(\alpha_1+1)}
\;\asymp\;
\frac{1}{\alpha_1}D\,(j^\ast)^{-\alpha_1}.
\]

Equivalently (as in Subsection~\ref{subsec:thm:gd_eta2_hpg}), a Poisson approximation gives
$\mathbb{P}(N_{>j^\ast}\ge 1)=1-\exp(-\mathbb{E}[N_{>j^\ast}])\asymp \mathbb{E}[N_{>j^\ast}]$ when
$\mathbb{E}[N_{>j^\ast}]$ is small, hence
$\mathbb{P}(N_{>j^\ast}\ge 1)\asymp \frac{1}{\alpha_1}D\,(j^\ast)^{-\alpha_1}$.

Substituting~\eqref{eq:j_star} gives
\[
\mathbb{P}_{\mathrm{rare}}(\mathrm{diverge}) \asymp
\frac{1}{\alpha_1}D
\left(\frac{2D}{\eta}\right)^{-\frac{\alpha_1}{-\alpha_2-1}}
\asymp D^{-\frac{\alpha_1+\alpha_2+1}{-\alpha_2-1}},
\]
which is Eq.~\eqref{eq:gd_failure_scaling} up to $\eta$-dependent constants.
\end{proof}

\subsubsection{Empirical Considerations for Theorem~\ref{thm:gd_failure_scaling}}
\label{app:gd_failure_empirical}
We discuss the practical difficulty of empirically validating Theorem~\ref{thm:gd_failure_scaling}'s rare-spike prediction $\mathbb{P}_{\mathrm{rare}}(\mathrm{diverge})\asymp D^{-\nu}$, and report our finite-$D$ measurements, which are consistent with the theorem's qualitative form.

\JSCheck{A clean validation would estimate $\mathbb{P}\!\left(\lambda_{\max}(\mathbf{X}\mathbf{X}^\top/D) > \epsilon\right)$ for fixed threshold $\epsilon = 2/\eta$ across a range of $D$ values spanning the asymptotic regime, then fit the slope on log-log axes and compare to the predicted exponent $\nu = (\alpha_1+\alpha_2+1)/(-\alpha_2-1)$. However, this represents a statistical challenge: the theorem characterizes a tail probability, and at any given $D$, accurate estimation of $\mathbb{P}_{\mathrm{rare}}(\mathrm{diverge})=p$ requires $\Theta(1/p)$ independent samples to observe events at the relevant rate, with sample size scaling further with $\nu$ for accurate exponent estimation. For the theorem's asymptotic regime to dominate observation, $D$ must be large enough that finite-$D$ corrections to the spectral concentration are negligible compared to the leading $D^{-\nu}$ behavior; in our experiments, this asymptotic crossover occurs beyond $D \approx 10^4$. Combining these requirements, a quantitative validation at a predicted exponent $\nu = 9$ (for chosen parameters $\alpha_1 = 2$, $\alpha_2 = -1.2$) would require roughly $10^9$ to $10^{12}$ independent dataset samples per $D$ value---many orders of magnitude beyond computational tractability with eigenvalue computation on $D \times D$ Gram matrices.}

\JSCheck{We perform experiments in which we sweep $D \in \{500, 1000, 2000, 5000, 10000\}$ at fixed threshold $\epsilon = 1.0005$ and feature dimension $M = 5000$, sampling between $500$ and $4000$ independent dataset replicates per $D$. We chose $\alpha_1 = 2,\ \alpha_2 = -1.2$ (giving $\nu = 9$) as a representative point well inside the regime $\alpha_2 < -1$ and $\alpha_1 + \alpha_2 + 1 > 0$ where Theorem~\ref{thm:gd_failure_scaling} applies (smaller values of $\nu$ are more easily detected in finite samples but tend to give less clean separation between tail-event probabilities and bulk concentration). The threshold $\epsilon$ was chosen empirically so that the smallest-$D$ measurement gives a moderate (non-saturated) failure rate. Our results are shown in Table~\ref{tab:gd_failure_empirical}.}

\begin{table}[h]
\vspace{-3mm}
\caption{Empirical divergence probability as a function of dataset size $D$, with $\alpha_1 = 2$, $\alpha_2 = -1.2$, $M=5000$, threshold $\epsilon=1.0005$.}
\label{tab:gd_failure_empirical}
\small
\setlength{\tabcolsep}{4pt}
\renewcommand{\arraystretch}{1.15}
\begin{tabularx}{\columnwidth}{
    >{\centering\arraybackslash}X
    >{\centering\arraybackslash}X
    >{\centering\arraybackslash}X
}
\toprule
$D$ & failures / seeds & $\mathbb{P}(\mathrm{diverge})$ \\
\midrule
500    & $185 / 500$  & $0.370$ \\
1000   & $132 / 1000$ & $0.132$ \\
2000   & $40 / 2000$  & $0.020$ \\
5000   & $1 / 3000$   & $0.00033$ \\
10000  & $0 / 4000$   & $< 0.00025$ \\
\bottomrule
\end{tabularx}
\vspace{-2mm}
\end{table}

\JSCheck{The decay spans more than three orders of magnitude in $\mathbb{P}(\mathrm{diverge})$ over an order of magnitude in $D$. Computing local exponents from successive $D$ pairs gives $\nu_{\mathrm{local}} \approx 1.5,\ 2.7,\ 4.5$, increasing monotonically---consistent with finite-$D$ corrections that vanish as the asymptotic regime is approached.}

\JSCheck{Two observations matter for the comparison to theory. First, the empirical $\mathbb{P}(\mathrm{diverge})$ exhibits a clean, monotone power-law decay with $D$---consistent in functional form and direction with Theorem~\ref{thm:gd_failure_scaling}. Second, the empirical exponent at the largest measurable $D$ ($\nu_{\mathrm{local}} \approx 4.5$) is approaching, but has not reached, the asymptotic prediction $\nu = 9$. Both observations are consistent with the theorem: the asymptotic exponent is a statement about $D \to \infty$, and the local empirical exponent's monotonic increase suggests we are inside the finite-$D$ regime where the asymptotic rate has not yet emerged. Distinguishing the asymptotic exponent $\nu = 9$ from any other large-$\nu$ value (e.g., $\nu = 6$ or $\nu = 12$) requires sample sizes and $D$ ranges substantially beyond what is computationally accessible with this experimental design. We conclude that our measurements are quantitatively non-contradictory with the theorem and qualitatively confirm its prediction of a sharp power-law decay in the divergence probability with dataset size.}

\section{\JSCheck{Experiments}}\label{app:training}


\subsection{Min-norm least-squares solutions via gradient-based optimization}

Our scaling-law theory predicts the test loss of the min-norm random-feature readout as $N$ and $D$ vary. We use two independent methods to recover this solution:
\begin{enumerate}[leftmargin=*,topsep=0pt]
    \item \textbf{Closed-form pseudoinverse.} For the linear regression problem $\min_{\boldsymbol{\theta}} \tfrac{1}{2D}\|\boldsymbol{\Phi}^\top\boldsymbol{\theta}-\mathbf{y}^\top\|_2^2$ on (linear or ReLU) random features, the min-norm least-squares solution is $\boldsymbol{\theta}^\star=(\boldsymbol{\Phi}^\top)^+\mathbf{y}^\top$ (equivalently $\boldsymbol{\theta}^{\star\top}=\mathbf{y}\boldsymbol{\Phi}^+$), computed via SVD-based pseudoinverse with relative tolerance $10^{-10}$.
    
    \item \textbf{Nesterov-accelerated GD with adaptive restart}~\citep{nesterov1983method,odonoghue2015adaptive}. The readout $\theta$ is initialized at zero and trained on the empirical mean-squared error with Nesterov momentum and gradient-based adaptive restart. From zero initialization, the converged solution is provably the same min-norm least-squares solution as the closed-form one.
\end{enumerate}

\begin{wrapfigure}[31]{R}{0.525\textwidth}
\vspace{-0.1cm}
\centering
\includegraphics[width=0.525\textwidth]{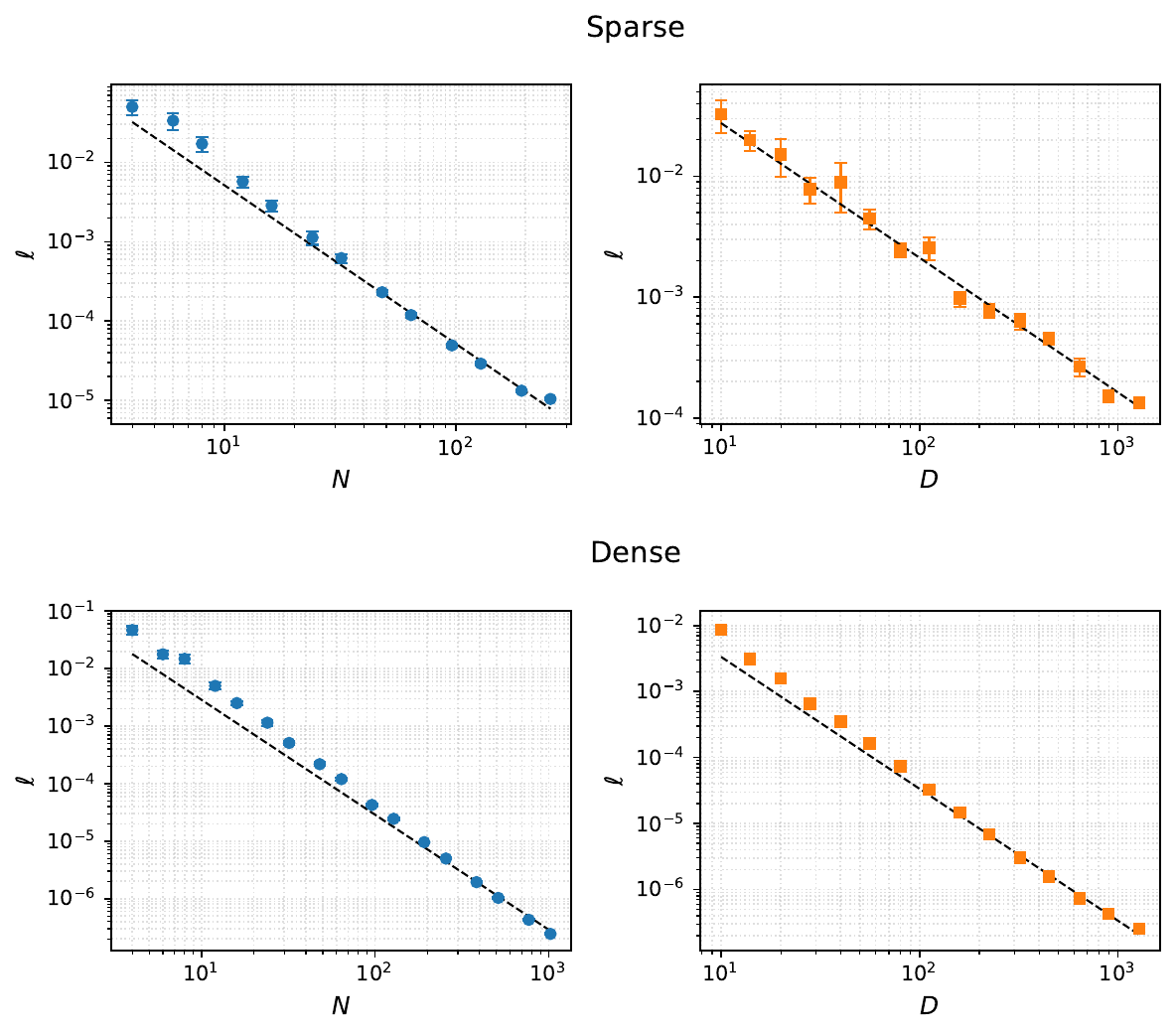}
\vspace{-6mm}
\caption{\textbf{Two-exponent scaling validates linear theory.} Test loss scaling under the linear feature map $\boldsymbol{\phi}(\mathbf{x})=\mathbf{u}\mathbf{x}$, computed via the closed-form min-norm least-squares solution; $(\alpha_1, \alpha_2) = (1.0, 0.3)$, $20$ seeds. Top: sparse; bottom: dense. Left: $N$-sweep at $D = 50{,}000$; right: $D$-sweep at $N = 16{,}000$. A single exponent $\alpha \approx 2.0$ (dashed lines, jointly fitted with separate intercepts) describes the sparse $N$-, dense $N$-, and dense $D$-sweeps, in close agreement with the linear-theory prediction $\alpha_N = \alpha_1 + \alpha_2 + 1 = 2.30$. The sparse $D$-sweep requires its own shallower fit, $\alpha_D \approx 1.11$, matching the predicted sparse exponent $\alpha_D = (\alpha_1+\alpha_2+1)/(\alpha_1+1) = 1.15$ and breaking the dense-baseline symmetry $\alpha_N = \alpha_D$.}
\label{fig:minNormLinear}
\end{wrapfigure}

Figures~\ref{fig:scaling-collapse} and~\ref{fig:compute-scaling} are computed using the closed-form pseudoinverse, while Figure~\ref{fig:MLP} is computed using the
Nesterov-accelerated solver.

The closed-form and Nesterov approaches give the same qualitative two-regime structure, supporting the interpretation that the observed exponent asymmetry is primarily a property of the regression problem rather than of a particular solver.   In the linear case (Figure~\ref{fig:minNormLinear}), the joint exponent across the sparse $N$-sweep, dense $N$-sweep, and dense $D$-sweep recovers $\alpha \approx 2.0$, in close agreement with the linear-theory prediction $\alpha_N = \alpha_1 + \alpha_2 + 1 = 2.30$, while the sparse $D$-sweep gives $\alpha_D \approx 1.11$, matching the theoretical prediction $\alpha_D = (\alpha_1+\alpha_2+1)/(\alpha_1+1) = 1.15$. The asymmetry ratio $\alpha_N/\alpha_D\approx 1.80$ is close to the predicted $2.30/1.15=2.00$, with a residual gap plausibly attributable to finite-size corrections and possible crossover effects near the double-descent peak. Under ReLU (Figures~\ref{fig:minNormReLU} and~\ref{fig:MLP}) both methods give the same qualitative structure: a joint exponent describing three of the four sweeps and a shallower sparse $D$-exponent breaking the dense-baseline symmetry, with the magnitudes $\alpha_N/\alpha_D \approx 1.13$ (closed-form) and $\approx 1.25$ (Nesterov). These finite-size corrections are roughly consistent across the linear and ReLU cases, indicating that nonlinearity reduces the magnitude of the asymmetry but does not alter its qualitative structure.


\begin{wrapfigure}[29]{R}{0.525\textwidth}
\vspace{-0.1cm}
\centering
\includegraphics[width=0.525\textwidth]{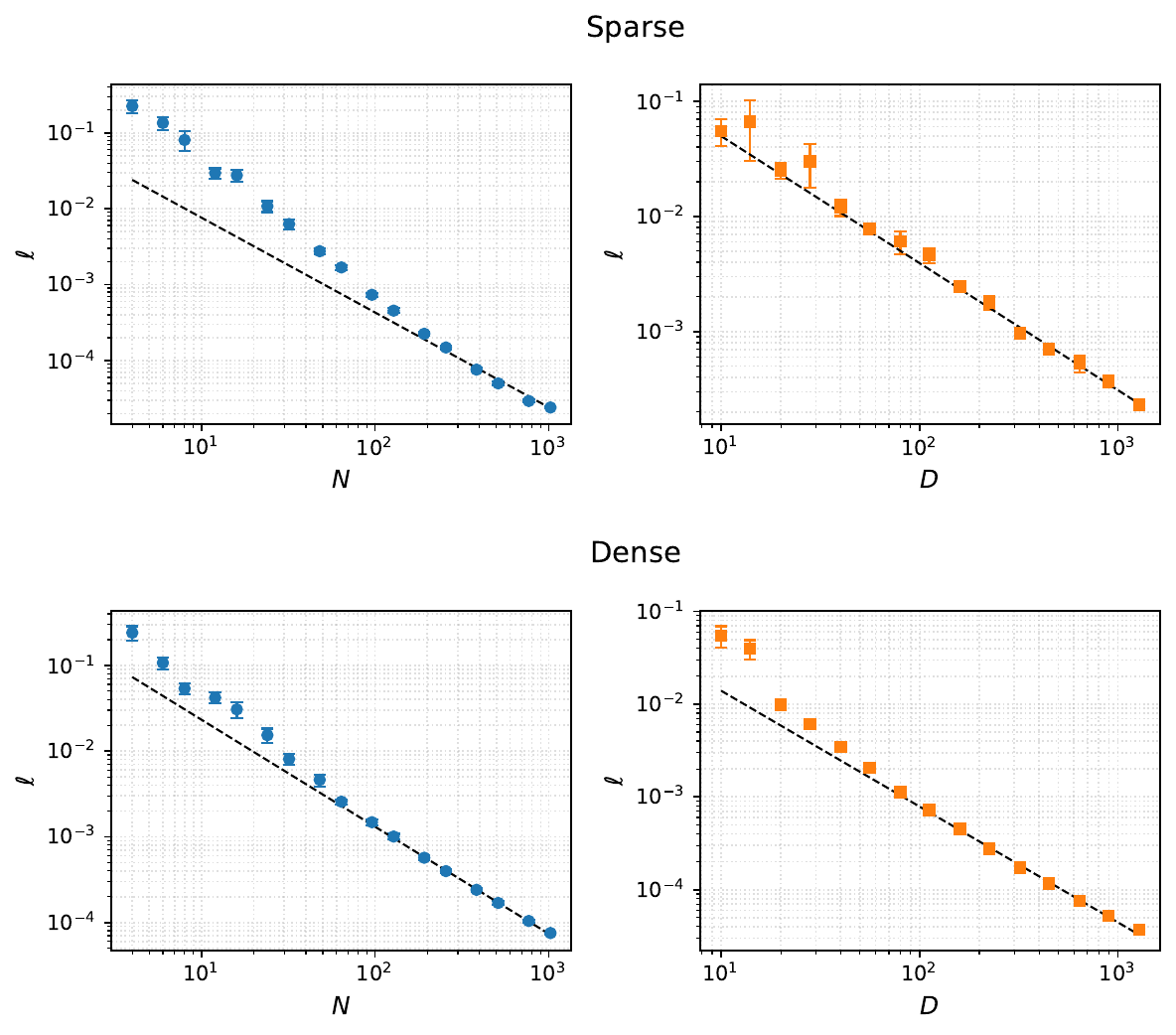}
\vspace{-6mm}
\caption{\textbf{Asymmetry persists under nonlinearity (closed-form).} Test loss scaling under the ReLU feature map $\boldsymbol{\phi}(\mathbf{x})=\sigma(\mathbf{u}\mathbf{x})$, computed via the closed-form min-norm least-squares solution; $(\alpha_1, \alpha_2) = (1.0, 0.3)$, $20$ seeds. Top: sparse; bottom: dense. Left: $N$-sweep at $D = 50{,}000$; right: $D$-sweep at $N = 16{,}000$. A single exponent $\alpha \approx 1.25$ (dashed lines, jointly fitted with separate intercepts) describes the sparse $N$-, dense $N$-, and dense $D$-sweeps. The sparse $D$-sweep requires its own shallower fit, $\alpha_D \approx 1.10$, breaking the dense-baseline symmetry $\alpha_N = \alpha_D$ predicted by linear theory; the smaller magnitude of the asymmetry compared to the linear case (Figure~\ref{fig:minNormLinear}) reflects spectral smoothing under nonlinear feature maps.}
\label{fig:minNormReLU}
\end{wrapfigure}

\paragraph{Setup.} All experiments use the data-generating model defined in Section~\ref{Section:Ourmodel}, with sparse parameters $(\alpha_1, \alpha_2) = (1.0, 0.3)$ and ambient dimension $M = 10{,}000$. The teacher is $y(\mathbf{x}) = \mathbf{w}^\top \mathbf{x}$ with $w_j \sim \mathcal{N}(0, 1)$ i.i.d. The first-layer weights $\mathbf{u} \in \mathbb{R}^{N \times M}$ are drawn i.i.d.\ from $\mathcal{N}(0, 1/N)$ and held frozen throughout training. The dense baseline uses Gaussian inputs with per-coordinate variance $j^{-(\alpha+1)}$, $\alpha = \alpha_1 + \alpha_2 + 1$, matching the underparameterized exponent of the sparse model so that any asymmetry in scaling is attributable purely to sparsity. Test loss is evaluated on $n_{\mathrm{test}}$ fresh samples drawn from the same distribution.

For each $(N, D)$ configuration we run multiple seeds (varying both data sampling and the random first layer) and report mean test loss with $\pm 1$ standard error. Power-law exponents are fitted on log-log axes by least squares, restricted to the asymptotic tail (last $4$--$6$ values of $N$ or $D$) to mitigate finite-size deviations. As a check, we also compare full-batch GD with minibatch SGD at small learning rate; the resulting trajectories and scaling exponents agree closely, suggesting that the sparse scaling laws reported here are not driven primarily by minibatch gradient noise, but by the underlying optimization/statistical structure.

\paragraph{Experiment 1: Linear random features.} The forward map is $\hat y = \theta^\top (\mathbf{u}\mathbf{x})$. The min-norm readout is solved in float64 via \texttt{torch.linalg.pinv} with $\texttt{rtol} = 10^{-10}$ to handle rank-deficient feature matrices robustly. We sweep $N \in \{4,\allowbreak 6,\allowbreak 8,\allowbreak 12,\allowbreak 16,\allowbreak 24,\allowbreak 32,\allowbreak 48,\allowbreak 64,\allowbreak 96,\allowbreak 128,\allowbreak 192,\allowbreak 256\}$ at fixed $D = 50{,}000$, and $D \in \{10,\allowbreak 14,\allowbreak 20,\allowbreak 28,\allowbreak 40,\allowbreak 56,\allowbreak 80,\allowbreak 112,\allowbreak 160,\allowbreak 224,\allowbreak 320,\allowbreak 448,\allowbreak 640,\allowbreak 896,\allowbreak 1280\}$ at fixed $N = 16{,}000$, with $20$ seeds per configuration and $n_{\mathrm{test}} = 50{,}000$. Fits use the last $6$ points of each sweep.

\paragraph{Experiment 2: ReLU random features.} The forward map is $\hat y = \theta^\top \sigma(\mathbf{u}\mathbf{x})$ with $\sigma$ the ReLU activation. Two solver variants are used:

\subparagraph{\emph{Closed-form (pinv).}} Identical to Experiment 1, with ReLU applied to the features before the pseudoinverse. Same sweep grids, $20$ seeds, and $n_{\mathrm{test}} = 50{,}000$.

\subparagraph{\emph{Iterative (Nesterov).}} The ReLU features $\boldsymbol{\Phi} = \sigma(\mathbf{u}\mathbf{X}) \in \mathbb{R}^{N \times D}$ are precomputed once in float64 on GPU. The step size is set to $\eta = 1/\lambda_{\max}(\boldsymbol{\Phi}\boldsymbol{\Phi}^\top/D)$, where $\lambda_{\max}$ is estimated by $50$ steps of power iteration. Nesterov updates with O'Donoghue--Cand{\`e}s adaptive restart~\citep{odonoghue2015adaptive} are applied until the relative gradient norm satisfies $\|\nabla\| < 10^{-9}\|\nabla_0\|$, with a maximum of $5 \times 10^5$ iterations. We sweep $N \in \{4, 8, 16, 32, 64, 128, 256\}$ at fixed $D = 50{,}000$, and $D \in \{10, 20, 40, 80, 160, 320, 640, 1280\}$ at fixed $N = 8000$, with $5$ seeds per configuration and $n_{\mathrm{test}} = 20{,}000$. Fits use the last $4$ points of each sweep.

\paragraph{Hardware and reproducibility.} All experiments were run on a single NVIDIA T4 or A100 GPU (Google Colab). Total wall-clock time was approximately $4$ hours for the closed-form sweeps (Experiments 1 and 2 closed-form) and $30$ minutes for the Nesterov sweeps. Float64 precision is used throughout the optimization and pseudoinverse computations; data sampling and feature evaluation use float32 with TF32 matrix multiplications enabled. The first layer $\mathbf{u}$ is regenerated per seed.

\end{document}